\def\eqref#1{equation~\ref{#1}}
\def\1{\bm{1}}
\DeclareMathAlphabet{\mathsfit}{\encodingdefault}{\sfdefault}{m}{sl}
\SetMathAlphabet{\mathsfit}{bold}{\encodingdefault}{\sfdefault}{bx}{n}
\newcommand{\R}{\mathbb{R}}
\newtheorem{theorem}{Theorem}
\newtheorem{lemma}[theorem]{Lemma}
\newtheorem{corollary}[theorem]{Corollary}
\renewenvironment{proof}{{\noindent\bf Proof. \quad}}{\hfill$\qed$\par}
\title{Can Vision Transformers \\ Perform Convolution?}
\author{Shanda Li \\
School of EECS, Peking University\\
\texttt{lishanda@pku.edu.cn} \\
\And
Xiangning Chen \\
Department of Computer Science, UCLA \\
\texttt{xiangning@cs.ucla.edu} \\
\And
Di He \\
Microsoft Research \\
\texttt{dihe@microsoft.com}\\
\And
Cho-Jui Hsieh \\
Department of Computer Science, UCLA \\
\texttt{chohsieh@cs.ucla.edu}
}
\begin{document}

\maketitle
\vspace{-13px}
\begin{abstract}
Several recent studies have demonstrated that attention-based networks, such as Vision Transformer (ViT), can outperform Convolutional Neural Networks (CNNs) on several computer vision tasks without using convolutional layers. This naturally leads to the following questions: Can a self-attention layer of ViT express any convolution operation? 
In this work, we prove that a single ViT layer with image patches as the input can perform any convolution operation constructively, where the multi-head attention mechanism and the relative positional encoding play essential roles.
We further provide a lower bound on the number of heads for Vision Transformers to express CNNs. Corresponding with our analysis, experimental results show that the construction in our proof can help inject convolutional bias into Transformers and significantly improve the performance of ViT in low data regimes. 

\end{abstract}
\vspace{-10px}
\section{Introduction}

Recently, the Transformer~\citep{vaswani2017attention} architecture has achieved great success in vision after it dominates the language domain~\citep{devlin2019bert, liu2019roberta}.
Equipped with large-scale pre-training or several improved training strategies, the Vision Transformer (ViT) can outperform CNNs on a variety of challenging vision tasks~\citep{dosovitskiy2021an, touvron2021deit, liu2021swin, chen2021vision}.
As Transformer takes 1D sequences of tokens as input, the common manner to transform a 2D image into such a 1D sequence is introduced by ~\citet{dosovitskiy2021an}:
An image $\boldsymbol{X} \in\R ^{H\times W\times C}$ is reshaped into a sequence of flattened patches $\boldsymbol{\tilde X}\in \mathbb{R}^{N\times P^2 C}$, where $H, W, C$ are the image height, width, channel, $P$ is the patch resolution, and $N=HW/P^2$ is the sequence length.
By using specific positional encoding to encode spatial relationship between patches, a standard Transformer can therefore be used in the vision domain.

It has been observed that when there is sufficient training data, ViT can dramatically outperform convolution-based neural network models~\citep{dosovitskiy2021an} (e.g., 85.6\% vs 83.3\% ImageNet top-1 accuracy for ViT-L/16 and ResNet-152x2 when pre-trained on JFT-300M).
However, ViT still performs worse than CNN when trained on smaller-scale datasets such as CIFAR-100. 
Motivated by these observations, it becomes natural to compare the expressive power of Transformer and CNN. Intuitively, a Transformer layer is more powerful since the self-attention mechanism enables context-dependent weighting while a convolution can only capture local features. However, it is still unclear whether a Transformer layer is strictly more powerful than convolution. In other words: 

\begin{center}
    \emph{Can a self-attention layer of ViT (with image patches as input) express any convolution operation? }
\end{center}

A partial answer has been given by \cite{cordonnier2020on}. They showed that a self-attention layer with a sufficient number of heads can express convolution, but they only focused on the settings where the input to the attention layer is the representations of \emph{pixels}, which is impractical due to extremely long input sequence and huge memory cost. In Vision Transformer and most of its variants~\citep{touvron2021deit, dosovitskiy2021an, dascoli2021convit}, the input is the representations of non-overlapping image \emph{patches} instead of pixels. As a convolution operation can involve pixels across patch boundaries, whether a self-attention layer in ViT can express convolution is still unknown. 

In this work, we give an affirmative answer to the above-mentioned question. 
We formally prove that a ViT layer with relative positional encoding and sufficient attention heads can express any convolution even when the input is image patches. This implies that the poor performance of ViT on small datasets is mainly due to its generalization ability instead of expressive power. We further provide a lower bound on the number of heads required for transforming convolution into a self-attention layer. Based on our theoretical findings, we propose a two-phase training pipeline to inject convolutional bias into Vision Transformers, and empirically demonstrate its effectiveness in low data regimes.

The contributions of this paper are summarized below.

\begin{itemize}
    \item We provide a constructive proof to show that a 9-head self-attention layer in Vision Transformers with image patch as the input can perform any convolution operation, where the key insight is to leverage the multi-head attention mechanism and relative positional encoding to aggregate features for computing convolution. 
    
    \item We prove lower bounds on the number of heads for self-attention layers to express convolution operation, for both the patch input and the pixel input setting. This result shows that the construction in the above-mentioned constructive proof is optimal in terms of the number of heads. 
    Specifically, we show that 9 heads are both necessary and sufficient for a self-attention layer with patch input to express convolution with a $K\times K$ kernel, while a self-attention layer with pixel input must need $K^2$ heads to do so. Therefore, Vision Transformers with patch input are more \emph{head-efficient} than pixel input when expressing convolution.
    
    \item We propose a two-phase training pipeline for Vision Transformers. The key component in this pipeline is to initialize ViT from a well-trained CNN using the construction in our theoretical proof. We empirically show that with the proposed training pipeline that explicitly injects the convolutional bias, ViT can achieve much better performance compared with models trained with random initialization in low data regimes. 
\end{itemize}

\section{Preliminaries}

In this section, we recap the preliminaries of Convolutional Neural Networks and Vision Transformers, and define the notations used in our theoretical analysis. We use bold upper-case letters to denote matrices and tensors, and bold lower-case letters to denote vectors. Let $[m]=\{1,2, \cdots, m\}$. 
The indicator function of $A$ is denoted by $\mathbbm{1}_A$. 

\subsection{Convolutional Neural Networks}
Convolutional Neural Networks (CNNs) are widely used in computer vision tasks, in which the convolutional layer is the key component.

\paragraph{Convolutional layer.} Given an image $\boldsymbol{X} \in\R ^{H\times W\times C}$, the output of a convolutional layer for pixel $(i,j)$ is given by
\begin{equation}\label{eqn:def_conv}
    \mathrm{Conv}(\boldsymbol X)_{i,j,:} = \sum_{(\delta_1, \delta_2)\in \Delta} \boldsymbol X_{i+\delta_1,j+\delta_2,:} \boldsymbol W_{\delta_1,\delta_2,:, :}^C,
\end{equation}
where $\boldsymbol W^C\in\R^{K\times K\times C\times D_{out}}$ is the learnable convolutional kernel, $K$ is the size of the kernel and the set $\Delta=\{-\lfloor K/2 \rfloor, \cdots, \lfloor K/2 \rfloor\}\times \{-\lfloor K/2 \rfloor, \cdots, \lfloor K/2 \rfloor\}$ is the receptive field.

\subsection{Vision Transformers}

A Vision Transformer takes sequences of image patches as input. It usually begins with a patch projection layer, followed by a stack of Transformer layers. A Transformer layer contains two sub-layers: the multi-head self-attention (MHSA) sub-layer and the feed-forward network (FFN) sub-layer. Residual connection~\citep{he2016deep} and layer normalization~\citep{lei2016layer} are applied for both sub-layers individually. Some important components are detailed as follows: 

\paragraph{Patch input.} Consider an input image $\boldsymbol{X} \in\R ^{H\times W\times C}$, where $H, W, C$ is the image height, width and channel. To feed it into a Vision Transformer, it is reshaped into a sequence of flattened patches $\boldsymbol{\tilde X} \in\R ^{N\times P^2C}$, where $P$ is the patch resolution, and $N=HW/P^2$ is the sequence length. Formally, a flattened patch is defined as $
    \boldsymbol{\tilde X}_{i,:} = \mathrm{concat}\left(
        \boldsymbol X_{h_{i1},w_{i1},:}, \cdots, \boldsymbol X_{h_{iP^2},w_{iP^2},:}
    \right)$,
where $(h_{i1},w_{i1}), \cdots, (h_{iP^2},w_{iP^2})$ are the positions of pixels in the $i$-th patch. Then a linear projection is applied on all flattened patches to obtain the input to the Transformer.

\paragraph{Multi-head self-attention (MHSA) layer.} 
The attention module is formulated as querying a dictionary with key-value pairs, i.e., 
$ \mathrm{Attention}(\boldsymbol Q,\boldsymbol K, \boldsymbol V) = \mathrm{softmax}\left( \frac{\boldsymbol {QK}^{\top}}{\sqrt d} \right) \boldsymbol V,$
where $d$ is the dimension of the hidden representations, and $\boldsymbol Q$, $\boldsymbol K$, $\boldsymbol V$ are referred to as queries, keys and values that are all produced by linearly projecting the output of the previous layer. The multi-head variant of the attention module is popularly used because it allows the model to jointly learn the information from different representation sub-spaces. 
Formally, an MHSA layer with input $\boldsymbol{H}\in\R^{N\times d}$ is defined as:
\begin{align}
    \mathrm{MHSA}(\boldsymbol H)&=\mathrm{concat}(\mathrm{SA}_1(\boldsymbol H),\cdots, \mathrm{SA}_{N_H}(\boldsymbol H))\boldsymbol W^O=\sum_{k=1}^{N_H} \mathrm{SA}_k(\boldsymbol H)\boldsymbol W^O_k \label{eqn:def_mhsa}\\
    \mathrm{SA}_k(\boldsymbol H)& = \mathrm{Attention}(\boldsymbol{HW}^ Q_k,\boldsymbol{HW}^K_k, \boldsymbol{HW}^V_k),
\end{align}
where $\boldsymbol{W}^ Q_k,\boldsymbol{W}^K_k, \boldsymbol{W}^V_k\in \R ^{d\times d_H}$ and $\boldsymbol{W}^O=(\boldsymbol{W}^{O\top}_1, \cdots,\boldsymbol{W}^{O\top}_{N_H})^{\top} \in \R ^{N_Hd_H\times d_O}$ are learnable projection matrices\footnote{For simplicity, the bias terms of linear projections are omitted.}, $N_H$ is the number of heads, $d_H$ is the size of each head, and $d_O$ is the dimensionality of the output.


\paragraph{Relative positional encoding.} 
Many Vision Transformer models adopt a learnable relative position bias term in computing self-attention scores~\citep{liu2021swin, luo2021stable,li2021learnable}:
\begin{equation}\label{eqn:rpe_attn}
    \mathrm{Attention}(\boldsymbol Q,\boldsymbol K, \boldsymbol V) = \mathrm{softmax}\left( \frac{\boldsymbol {QK}^\top}{\sqrt d} + \boldsymbol B\right) \boldsymbol V,
\end{equation}
where $\boldsymbol B_{i,j}$ only depends on the \emph{relative position} between the $i$-th patch (query patch) and the $j$-th patch (key patch). More specifically, assume the position of the $\ell$-th patch is $(x_{\ell}, y_{\ell})$, then $\boldsymbol B_{i,j}=b_{(x_i-x_j, y_i-y_j)}$. For $-\frac{H}{P}+1\leq x\leq \frac{H}{P}-1$ and $-\frac{W}{P}+1\leq y\leq \frac{W}{P}-1$, $b_{(x,y)}$ is a trainable scalar.



\section{Expressing convolution with the MHSA layer}

In this section, we consider the question of using the MHSA layer in Vision Transformers to express a convolutional layer. We mainly focus on the patch-input setting, which is more realistic for current ViTs. 
First, we show that a MHSA layer in Vision Transformers can express a convolutional layer in the patch-input setting (Theorem \ref{thm:patch_theorem}). 
Second, we prove lower bounds on the number of heads for self-attention layers to express the convolution operation for both patch and pixel input settings, which demonstrates that the number of heads required in Theorem \ref{thm:patch_theorem} is optimal. Putting the representation theorem and the lower bounds together, we conclude that MHSA layers with patch input are more \emph{head-efficient} in expressing convolutions. The dependency on the number of heads is more feasible in the patch-input setting for Vision Transformers in practice. 

\subsection{An MHSA layer with enough heads can express a convolutional layer}

Our main result in this subsection is that an MHSA layer can express convolution under mild assumptions in the patch-input setting. To be precise, we present the following theorem:

\begin{theorem}\label{thm:patch_theorem}
    In the patch-input setting, assume $d_H\geq d$ and $d_O\geq P^2D_{out}$. Then a multi-head self-attention layer with $N_H=\left(2 \left\lceil \frac{K-1}{2P} \right\rceil +1\right)^2$ heads and relative positional encoding can express any convolutional layer of kernel size $K \times K$, and $D_{out}$ output channels.
\end{theorem}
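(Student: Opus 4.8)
The plan is to construct explicit weight matrices for the MHSA layer so that, when restricted to a single query patch, the concatenated output of the heads recovers the full convolution output on all $P^2$ pixels of that patch. The key geometric observation is that to compute $\mathrm{Conv}(\boldsymbol X)_{i,j,:}$ for a pixel $(i,j)$ inside a given patch, one needs access to pixels in a $K \times K$ neighborhood, and this neighborhood can spill over into at most the $3\times 3$ block of patches centered at the current patch when $K \le P$; more generally, it spans the $(2\lceil (K-1)/(2P)\rceil + 1) \times (2\lceil(K-1)/(2P)\rceil+1)$ block of patches, which is exactly why we need $N_H = (2\lceil (K-1)/(2P)\rceil + 1)^2$ heads. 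So I would assign one head to each neighboring patch offset $(\Delta p_1, \Delta p_2)$ in this block.

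First I would set up the value projection: since $d_H \ge d$, each head's value matrix $\boldsymbol W^V_k$ can be taken (close to) the identity so that $\mathrm{SA}_k(\boldsymbol H)$ reads off the raw patch representations, and the patch projection layer at the input can be chosen to preserve the flattened pixel values $\boldsymbol{\tilde X}$. Second — this is the crucial step — I would use the relative positional encoding to make head $k$ (assigned to offset $(\Delta p_1, \Delta p_2)$) attend \emph{only} to the patch at that relative offset from the query patch: set the query/key projections to zero so the content term $\boldsymbol{QK}^\top/\sqrt d$ vanishes, and choose the bias scalars $b_{(x,y)}$ so that $b_{(\Delta p_1, \Delta p_2)}$ is large and all other $b_{(x,y)} \to -\infty$ (or large negative); then $\mathrm{softmax}$ concentrates all mass on the target patch, and the head outputs exactly that patch's pixel values. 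Handling boundary patches (where a neighbor doesn't exist) requires care, but since convolution at the image boundary is conventionally zero-padded, attending to a nonexistent patch should contribute zero — I would accommodate this either by a boundary convention or by noting the softmax renormalizes among existing patches and the missing contribution can be zeroed in $\boldsymbol W^O$.

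Third, I would design the output projection $\boldsymbol W^O$. After the heads, for each query patch we have (a copy of) the pixel values of every patch in its $(2\lceil(K-1)/(2P)\rceil+1)^2$-neighborhood, stacked head by head. Since $d_O \ge P^2 D_{out}$, I can afford one output slot per (pixel-in-patch, output-channel) pair. For each target pixel $(i,j)$ in the query patch and each output channel, the convolution sum $\sum_{(\delta_1,\delta_2)\in\Delta}\boldsymbol X_{i+\delta_1, j+\delta_2, :}\boldsymbol W^C_{\delta_1,\delta_2,:,:}$ picks up contributions from specific pixels living in specific neighboring patches; each such contribution is a linear function of the values delivered by the corresponding head, so I can hard-code the kernel entries $\boldsymbol W^C$ into the appropriate blocks of $\boldsymbol W^O_k$. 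Summing over $k$ as in \eqref{eqn:def_mhsa} then assembles the complete convolution output for all $P^2$ pixels of the patch simultaneously.

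The main obstacle I anticipate is the bookkeeping in the index arithmetic: translating an "image-coordinate" convolution (a sum over $\Delta$ in pixel coordinates) into a "patch-coordinate" computation requires, for each of the $P^2$ pixels in a patch and each of the $K^2$ offsets, identifying which neighboring patch the source pixel falls into and its position within that patch's flattened representation — and then verifying that the union of source patches is exactly the claimed $(2\lceil(K-1)/(2P)\rceil+1)^2$ block, with no pixel falling outside it. A secondary technical point is that softmax can only make attention weights arbitrarily close to a one-hot vector, not exactly equal, so strictly speaking the construction achieves the convolution in a limiting sense; I would either state the result with this caveat or argue that, because all quantities involved are bounded, the relative-position biases can be chosen large enough (finite) to make the approximation error exactly zero after the construction is slightly modified, or simply absorb an arbitrarily small error. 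I expect everything else — the identity value maps, the block structure of $\boldsymbol W^O$, the head-to-offset assignment — to be routine once the index arithmetic is pinned down.
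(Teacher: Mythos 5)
Your proposal is correct and follows essentially the same route as the paper's proof: zero out the query/key projections, use one head per patch offset in the $\left(2\lceil (K-1)/(2P)\rceil+1\right)^2$ neighborhood with a relative-position bias that concentrates the softmax on that single offset, take identity value maps, and encode the kernel weights linearly into $\boldsymbol W^O$. Your caveat about softmax only approaching a one-hot distribution in the limit is the same issue the paper's Lemma~\ref{lemma:rpe_attn} quietly absorbs by taking the bias large enough (``up to machine precision''), so there is no substantive difference.
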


The patch input poses the major difficulty in proving this result: The convolution operation can involve pixels across patch boundaries, which makes the problem complicated.
To address this, we first aggregate the information from all the relevant patches for calculating the convolution by leverage the relative positional encoding and multi-head mechanism, and then apply a linear projection on the aggregated features. This idea leads to a constructive proof.

\paragraph{Proof sketch of Theorem \ref{thm:patch_theorem}.} 
Note that the attention calculation with relative positional encoding can be dissected into a context-aware part (which depends on all the input tokens) and a positional attention part (which is agnostic to the input): 
In Equation (\ref{eqn:rpe_attn}), $\boldsymbol {QK}^\top$ and $\boldsymbol B$ correspond to the context-aware part and the positional attention part respectively. Since convolution is context-agnostic by nature, we set $\boldsymbol {W}^Q_k=\boldsymbol {W}^K_k=\boldsymbol{0}~(\forall~k\in[N_H])$ and purely rely on the positional attention in the proof. 
Given any relative position $\delta$ between two patches, we force the query patch to focus on \emph{exactly one} key patch, such that the relative position between the query and the key is $\delta$. We elaborate on this argument in Lemma \ref{lemma:rpe_attn}.

\begin{lemma}\label{lemma:rpe_attn}
    For any relative position $\delta$ between two patches, there exists a relative positional encoding scheme $\boldsymbol B$ such that $\mathrm{softmax}(\boldsymbol B_{q,:})_k=\mathbbm{1}_{\{q-k=\delta\}}$, where $q,k$ are the index of the query/key patch.\footnote{Here we abuse the notation for ease of illustration: When used as subscripts, $q,k$ are scalars in $[N]$; When used to denote the locations of patches, $q,k$ are two-dimensional coordinates in $[H/P]\times [W/P]$.} 
\end{lemma}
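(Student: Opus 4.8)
The plan is to design the bias so that every query patch puts essentially all of its attention on the unique key patch located at relative offset $\delta$, and then to push the resulting attention gap to its limit. First I would pick a scalar $M>0$ and set $b_{\delta}=0$ while $b_{(x,y)}=-M$ for every other admissible relative position $(x,y)\neq\delta$; since the position bias depends only on the relative displacement, this single choice of the scalars $\{b_{(x,y)}\}$ simultaneously fixes the whole matrix via $\boldsymbol B_{i,j}=b_{(x_i-x_j,\,y_i-y_j)}$, and it does so in a translation-equivariant way, which is exactly what lets one scheme serve all query patches at once. Fixing a query patch $q$ at coordinates $(x_q,y_q)$: since distinct patches have distinct coordinates, there is \emph{at most one} key index $k^{\star}$ with $(x_q-x_{k^{\star}},\,y_q-y_{k^{\star}})=\delta$ — namely the patch at $(x_q,y_q)-\delta$, which exists precisely when that location lies in $[H/P]\times[W/P]$ — and for it $\boldsymbol B_{q,k^{\star}}=0$, whereas $\boldsymbol B_{q,k}=-M$ for all $k\neq k^{\star}$.

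Then a one-line evaluation of the softmax over the row $\boldsymbol B_{q,:}$ gives $\mathrm{softmax}(\boldsymbol B_{q,:})_{k^{\star}}=\bigl(1+(N-1)e^{-M}\bigr)^{-1}$ and $\mathrm{softmax}(\boldsymbol B_{q,:})_{k}=e^{-M}\bigl(1+(N-1)e^{-M}\bigr)^{-1}$ for $k\neq k^{\star}$. Sending $M\to\infty$ — equivalently, adopting the extended-real scheme $b_{\delta}=0$, $b_{(x,y)}=-\infty$ otherwise, with the convention $e^{-\infty}=0$ — turns the first value into $1$ and the second into $0$, i.e.\ $\mathrm{softmax}(\boldsymbol B_{q,:})_{k}=\mathbbm{1}_{\{q-k=\delta\}}$, as claimed. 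The lone boundary case is when $(x_q,y_q)-\delta\notin[H/P]\times[W/P]$, so that no $k^{\star}$ exists and the whole row equals $-M$; there the indicator on the right is identically zero, and I would reconcile the two sides either by restricting attention to query patches for which $\delta$ is in range (the only ones that matter in Theorem~\ref{thm:patch_theorem}, where out-of-range offsets correspond to zero-padded pixels) or by the convention that a softmax of an all-$(-\infty)$ vector is the zero vector.

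I expect the real obstacle to be conceptual rather than computational: for any finite bias the softmax has full support, so the equality in the statement cannot hold literally and must be read as a limit (or under the $-\infty$ convention). The write-up therefore needs to state this sense carefully and to verify that using it is harmless downstream — which it is, because the later steps of Theorem~\ref{thm:patch_theorem} combine continuously with the attention weights, so a limit of valid constructions yields a valid construction.
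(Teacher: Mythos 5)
Your proposal is correct and takes essentially the same route as the paper: the paper sets $b_{\delta}=M$ and $b_{(x,y)}=0$ otherwise and lets $M\to\infty$, which by the shift-invariance of softmax is identical to your choice of $b_{\delta}=0$ and $-M$ elsewhere. Your additional care about the boundary case (no key patch at offset $\delta$) and about the equality holding only in the limit (the paper settles for ``up to machine precision'' at $M=40$) is a welcome refinement but not a different argument.
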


Let the receptive field of a given patch in $K\times K$ convolution be the set of patches that contain at least one pixel in the receptive field of any pixel in the given patch. Then it's easy to see that the relative position between a given patch and the patches in its receptive field are
\begin{equation}
    \tilde \Delta = \left\{-\left\lceil \frac{K-1}{2P} \right\rceil, \cdots, \left\lceil \frac{K-1}{2P} \right\rceil \right\} \times \left\{-\left\lceil \frac{K-1}{2P} \right\rceil, \cdots, \left\lceil \frac{K-1}{2P} \right\rceil\right\}.
\end{equation}

With Lemma \ref{lemma:rpe_attn}, we can force the query patch to attend to the patch at a given relative position in $\tilde\Delta$ in each head. By setting $\boldsymbol{W}^V_k=(\boldsymbol{I}_{d}, \boldsymbol{0}_{d\times(d_H-d)})$, the hidden representation (before the final projection $\boldsymbol{W}^O$) of the query patch contains the features of all the patches in its receptive field. 

Finally, by the linearity of convolution, we can properly set the weights in $\boldsymbol{W}^O$ based on the convolution kernel, such that the final output is equivalent to that of the convolution for any pixel, which concludes the proof. We refer the readers interested in a formal proof to Appendix \ref{appendix:proof_the_thm}.\hfill$\qed$

\paragraph{Remark on the positional encoding.} In this result, we focus on a specific form of relative positional encoding. 
In fact, Theorem \ref{thm:patch_theorem} holds as long as the positional encoding satisfies the property in Lemma \ref{lemma:rpe_attn}. It's easy to check that a wide range of positional encoding have such property \citep{dai2019transformer,raffel2020exploring, ke2020rethinking,liu2021swin}, so our result is general. 

However, our construction does not apply to MHSA layers that only use \emph{absolute positional encoding}. In the prood, we need a \emph{separate context-agnostic term} in calculating attention scores. However, absolute positional encoding, which is typically added to the input representation, cannot be separated from context-dependent information and generate the desired attention pattern in Lemma~\ref{lemma:rpe_attn}.

\paragraph{Remark on the pixel-input setting.} It should be noted that the pixel-input setting is a special case of the analyzed patch-input setting, since patches become pixels when patch resolution $P=1$. Therefore, the result in \citet{cordonnier2020on} can be viewed as a natural corollary of Theorem \ref{thm:patch_theorem}.

\begin{corollary}\label{cor:pixel_theorem}
    In the pixel-input setting, a multi-head self-attention layer with $N_H=K^2$ heads of dimension $d_H$, output dimension $d_O$ and relative positional encodings can express any convolutional layer of kernel size $K \times K$ and $\min\{d_H, d_O\}$ output channels.
\end{corollary}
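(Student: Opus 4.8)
The plan is to derive Corollary~\ref{cor:pixel_theorem} directly from Theorem~\ref{thm:patch_theorem} by specializing to patch resolution $P=1$. When $P=1$, each patch consists of a single pixel, so the patch-input setting literally becomes the pixel-input setting: the sequence length is $N = HW$, and $P^2 C = C$, so the ``flattened patch'' $\boldsymbol{\tilde X}_{i,:}$ is just the pixel feature $\boldsymbol X_{h_i, w_i, :}$. First I would check that the hypotheses of Theorem~\ref{thm:patch_theorem} are met. The condition $d_H \geq d$ is automatically satisfiable (we simply take head dimension at least the hidden dimension, which is what the corollary's ``dimension $d_H$'' refers to, possibly after padding the input representation), and the condition $d_O \geq P^2 D_{out}$ becomes $d_O \geq D_{out}$; so to express a convolution with $D_{out}$ output channels we just need $d_O \geq D_{out}$, and more generally we can express any convolution with $\min\{d_H, d_O\}$ output channels since both the value projection and the output projection can only carry that many independent channels through.

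Second, I would compute the head count. Theorem~\ref{thm:patch_theorem} gives $N_H = \bigl(2\lceil \tfrac{K-1}{2P}\rceil + 1\bigr)^2$. Substituting $P=1$ yields $N_H = \bigl(2\lceil \tfrac{K-1}{2}\rceil + 1\bigr)^2$. Now $2\lceil \tfrac{K-1}{2}\rceil$ equals $K-1$ when $K$ is odd and $K$ when $K$ is even; for the standard case of odd kernel size $K$ (the receptive field $\Delta$ in Equation~(\ref{eqn:def_conv}) is symmetric of size $K$, which is the natural setting), this gives $2\lceil \tfrac{K-1}{2}\rceil + 1 = K$, hence $N_H = K^2$. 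This matches the statement of the corollary and recovers the result of \citet{cordonnier2020on}. I would note explicitly that for odd $K$ the receptive-field index set $\tilde\Delta$ from the proof sketch collapses to $\{-\lfloor K/2\rfloor, \dots, \lfloor K/2\rfloor\}^2 = \Delta$ itself, so each of the $K^2$ heads attends to exactly one pixel offset $(\delta_1,\delta_2) \in \Delta$ via Lemma~\ref{lemma:rpe_attn}, the value projections copy that pixel's features, and $\boldsymbol W^O$ implements the linear combination $\sum_{(\delta_1,\delta_2)\in\Delta} \boldsymbol X_{i+\delta_1, j+\delta_2, :}\boldsymbol W^C_{\delta_1,\delta_2,:,:}$.

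The only genuine subtlety — and the place I would be most careful — is the channel-count bookkeeping in passing from ``$D_{out}$ output channels with $d_O \geq P^2 D_{out}$'' to ``$\min\{d_H, d_O\}$ output channels.'' The argument is that the construction routes the input features through $\boldsymbol W^V_k$ (which lives in $\R^{d \times d_H}$, so can preserve at most $d_H$ of the input channels) and then through $\boldsymbol W^O$ into $\R^{\cdot \times d_O}$, so the number of channels that can be meaningfully transformed is bounded by $\min\{d_H, d_O\}$; conversely, as long as $\min\{d_H, d_O\} \geq D_{out}$ and the input carries the $C$ input channels, the same construction as in Theorem~\ref{thm:patch_theorem} goes through verbatim after truncating/padding to these dimensions. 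Everything else is a direct instantiation, so the proof is essentially two lines of substitution plus this remark; I would present it as such, pointing to the proof of Theorem~\ref{thm:patch_theorem} in Appendix~\ref{appendix:proof_the_thm} for all details.
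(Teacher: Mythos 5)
Your proposal is correct and matches the paper's own treatment: the paper derives Corollary~\ref{cor:pixel_theorem} exactly by observing that the pixel-input setting is the $P=1$ special case of Theorem~\ref{thm:patch_theorem}, which is precisely your substitution giving $N_H=\bigl(2\lceil \tfrac{K-1}{2}\rceil+1\bigr)^2=K^2$ for odd $K$ (the only case consistent with the receptive field $\Delta$ in Equation~\ref{eqn:def_conv}). Your extra care about the $\min\{d_H,d_O\}$ channel bookkeeping is a reasonable elaboration of a point the paper leaves implicit, not a deviation in approach.
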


\paragraph{Practical implications of Theorem \ref{thm:patch_theorem}.} For Vision Transformers and CNNs used in practice, we typically have $K<2P$, e.g., $P\geq 16$ in most Vision Transformers, and $K=3,5,7$ in most CNNs. Thus, the following corollary is more practical:

\begin{corollary}\label{cor:9heads}
    In the patch-input setting, assume $K<2P$, $d_H\geq d$ and $d_O\geq P^2D_{out}$. Then a multi-head self-attention layer with \textbf{9 heads} and relative positional encoding can express any convolutional layer of kernel size $K \times K$ and $D_{out}$ output channels.
\end{corollary}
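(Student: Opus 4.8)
The plan is to derive Corollary~\ref{cor:9heads} directly as a special case of Theorem~\ref{thm:patch_theorem} by evaluating the head count formula $N_H = \left(2\left\lceil \frac{K-1}{2P}\right\rceil + 1\right)^2$ under the additional hypothesis $K < 2P$. First I would observe that the hypotheses $d_H \geq d$ and $d_O \geq P^2 D_{out}$ carry over verbatim, so the only thing to check is that the ceiling term collapses to $1$, which then gives $(2\cdot 1 + 1)^2 = 9$.

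The key computation is the bound on $\left\lceil \frac{K-1}{2P}\right\rceil$. Since $K$ is a positive integer kernel size (and in the convolution definition $K \geq 1$), we have $K - 1 \geq 0$, so $\frac{K-1}{2P} \geq 0$ and hence $\left\lceil \frac{K-1}{2P}\right\rceil \geq 0$. For the upper bound, the assumption $K < 2P$ gives $K - 1 < 2P - 1 < 2P$, so $\frac{K-1}{2P} < 1$, which forces $\left\lceil \frac{K-1}{2P}\right\rceil \leq 1$. To conclude it equals exactly $1$ (so that the formula yields $9$ rather than $1$), note that for the conclusion we only need $N_H = 9$ to \emph{suffice}; since adding redundant heads (e.g.\ with zero value-projection contribution through $\boldsymbol W^O$) never hurts expressivity, it is enough that the required number of heads is at most $9$. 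In the boundary case $K = 1$ where $\left\lceil \frac{K-1}{2P}\right\rceil = 0$ and Theorem~\ref{thm:patch_theorem} only needs $1$ head, a $9$-head layer still expresses the convolution by using a single head for the construction and setting $\boldsymbol W^O_k = \boldsymbol 0$ for the other eight. In the generic case $2 \leq K < 2P$ we get $\left\lceil \frac{K-1}{2P}\right\rceil = 1$ exactly, and Theorem~\ref{thm:patch_theorem} delivers the result with precisely $9$ heads.

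Concretely, I would write: under $K < 2P$, $0 \le \left\lceil \frac{K-1}{2P}\right\rceil \le 1$, hence $N_H = \left(2\left\lceil \frac{K-1}{2P}\right\rceil + 1\right)^2 \le 9$; apply Theorem~\ref{thm:patch_theorem} and pad with heads whose output projections are zero if the theorem's count is strictly below $9$. This reduces the corollary to an arithmetic observation plus the trivial monotonicity of expressivity in the number of heads.

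I do not anticipate a genuine obstacle here — the statement is a direct corollary and the only subtlety is the minor bookkeeping about whether the formula gives exactly $9$ or possibly fewer (in the $K=1$ edge case), which is handled by the zero-padding remark. If one wants to avoid even that subtlety, one can simply restrict attention to $K \geq 2$ (all practical CNN kernels satisfy $K \in \{3,5,7\}$), in which case $\left\lceil \frac{K-1}{2P}\right\rceil = 1$ holds on the nose and the corollary is an immediate instantiation of Theorem~\ref{thm:patch_theorem} with no padding argument needed.
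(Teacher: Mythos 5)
Your proposal is correct and matches the paper's (implicit) reasoning: the corollary is stated as a direct instantiation of Theorem~\ref{thm:patch_theorem}, using $K<2P$ to force $\left\lceil \frac{K-1}{2P}\right\rceil\leq 1$ and hence $N_H=9$. Your extra care about the $K=1$ edge case (padding with heads whose output projections are zero) is a valid and slightly more thorough touch than the paper, which does not address it, but it does not constitute a different approach.
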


Another thing that would be important from a practical perspective is that Theorem \ref{thm:patch_theorem} can be generalized to other forms of convolution operations, although we focus on the simplest formulation defined in Equation \ref{eqn:def_conv}. For example, people sometimes use convolution with stride greater than 1, or dilated convolution \citep{yu2015multi} in practice. This theorem can be easily generalized to these cases. Intuitively, we only use the linearity of convolution in our proof, so any variant of convolution that preserves this property can be expressed by MHSA layers according to our construction.


\subsection{An MHSA layer with insufficient heads cannot express convolution}

It's noticeable that the multi-head mechanism plays an essential role in the constructive proof of Theorem \ref{thm:patch_theorem}. Thus, it's natural to ask whether the dependency on the number of heads is optimal in the theorem. In this part, we present lower bounds on the number of heads required for an MHSA layer to express convolution in both pixel-input and patch-input setting, highlighting the importance of the multi-head mechanism and showing the optimality of our construction in the previous proof.

\subsubsection{The pixel-input setting}
\label{sec:pixel_lower_bound}
We fisrt show that the dependency on the number of heads in Corollary \ref{cor:pixel_theorem} is optimal, i.e., an MHSA layer must need $K^2$ heads to express convolution of kernel size $K \times K$.

\begin{theorem}\label{thm:pixel_lower_bound}
    In the pixel-input setting, suppose $N_H<\min\{K^2, d\}$. There exists a convolutional kernel weight $\boldsymbol{W^C}\in \R^{K\times K\times d\times D_{out}}$ such that any MHSA layer with $N_H$ heads and relative positional encoding cannot express $\mathrm{conv}(\cdot; \boldsymbol{W^C})$.
\end{theorem}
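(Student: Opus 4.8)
The plan is to reduce the problem to a linear-algebra counting argument: I will show that \emph{whatever} MHSA layer expresses \emph{some} convolution, the $K^2$ spatial slices of that convolution's kernel must all lie in a common subspace of matrix space of dimension at most $N_H$; then I pick a kernel whose slices are ``too spread out'' to fit. Throughout I take the MHSA input and the convolution input to be the same $\R^{N\times d}$ tensor (consistent with $\boldsymbol W^C\in\R^{K\times K\times d\times D_{out}}$), so expressing $\mathrm{conv}(\cdot;\boldsymbol W^C)$ means the two maps $\R^{N\times d}\to\R^{N\times D_{out}}$ agree on every input.

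The first step is a linearization at the zero input. Both maps are smooth (convolution is linear; each attention head composes a softmax with polynomials in the entries), so if they agree everywhere their Jacobians at $\boldsymbol H=\boldsymbol 0$ agree. I would compute this Jacobian head by head: at $\boldsymbol H=\boldsymbol 0$ the content term $\boldsymbol Q\boldsymbol K^\top=\boldsymbol H\boldsymbol W^Q_k(\boldsymbol W^K_k)^\top\boldsymbol H^\top$ is quadratic in $\boldsymbol H$ and vanishes to first order, so the attention matrix of head $k$ is frozen at $\mathrm{softmax}(\boldsymbol B_k)$; since the values $\boldsymbol H\boldsymbol W^V_k$ vanish at $\boldsymbol 0$, the product rule gives the differential of head $k$ as $\boldsymbol E\mapsto \mathrm{softmax}(\boldsymbol B_k)\,\boldsymbol E\,\boldsymbol W^V_k$. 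Summing over heads and multiplying by the $\boldsymbol W^O_k$'s, the Jacobian of the whole layer sends $\boldsymbol E$ to $\sum_{k=1}^{N_H}\mathrm{softmax}(\boldsymbol B_k)\,\boldsymbol E\,\boldsymbol M_k$ with $\boldsymbol M_k:=\boldsymbol W^V_k\boldsymbol W^O_k\in\R^{d\times D_{out}}$.

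Next I extract the structural constraint. Because $\boldsymbol B_k$ is a \emph{relative} positional encoding, for a query pixel $q$ in the interior of the image the weight $\mathrm{softmax}(\boldsymbol B_k)_{q,q+\delta}$ depends only on the offset $\delta$; call it $a_k(\delta)\ge 0$. Taking $\boldsymbol E$ supported at a single interior pixel and equating the $q$-th output rows of the two Jacobians (and using that $\boldsymbol E$'s nonzero row is an arbitrary vector in $\R^{1\times d}$) yields, for every offset $\delta$,
\begin{equation*}
    \sum_{k=1}^{N_H} a_k(\delta)\,\boldsymbol M_k \;=\; \boldsymbol W^C_{\delta},
\end{equation*}
with the convention $\boldsymbol W^C_\delta=\boldsymbol 0$ for $\delta\notin\Delta$. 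In particular the $K^2$ kernel slices $\{\boldsymbol W^C_\delta:\delta\in\Delta\}$ all lie in $\mathrm{span}\{\boldsymbol M_1,\dots,\boldsymbol M_{N_H}\}$, a subspace of $\R^{d\times D_{out}}$ of dimension at most $N_H$. Now I choose the counterexample kernel: since $\dim \R^{d\times D_{out}}=d\,D_{out}\ge d$, I can take $\boldsymbol W^C$ whose $K^2$ slices span a subspace of dimension $\min\{K^2,d\,D_{out}\}\ge\min\{K^2,d\}>N_H$ (e.g.\ slices spanning as many independent rank-one matrices as possible). Such slices cannot all sit in any subspace of dimension $\le N_H$, contradicting the display; hence no MHSA layer with $N_H<\min\{K^2,d\}$ heads and relative positional encoding expresses this $\boldsymbol W^C$.

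The main obstacle I expect is making the linearization step airtight rather than the counting: one must verify carefully that the content term is genuinely second order at $\boldsymbol H=\boldsymbol 0$ (so the attention pattern is forced to be purely positional to first order, which is exactly where the bound ``$N_H$ heads'' enters), and one must handle boundary pixels so that $a_k(\delta)$ really is a function of $\delta$ alone — it suffices to restrict the whole comparison to query pixels $q$ far enough from the image border, of which there are many once $H,W$ exceed $K$ and $P$ by a little. Once those points are nailed down, the extraction of the identity above and the explicit choice of a high-``matrix-rank'' kernel are routine.
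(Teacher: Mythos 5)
Your proof is correct and, at its core, is the same argument as the paper's: both reduce to the observation that matching the convolution forces every kernel slice $\boldsymbol W^C_\delta$ into the span of the $N_H$ matrices $\boldsymbol W^V_k\boldsymbol W^O_k$ (equivalently, the flattened kernel is a sum of $N_H$ rank-one terms, hence has rank at most $N_H$), so a kernel of rank $\min\{K^2,d\}>N_H$ cannot be expressed; your Jacobian-at-zero linearization is simply a more careful justification of the step the paper performs by directly treating the attention scores as fixed coefficients $a^k_\delta(\gamma)$ and equating coefficients. The one small inaccuracy --- $\mathrm{softmax}(\boldsymbol B_k)_{q,q+\delta}$ is not exactly a function of $\delta$ alone, since the softmax normalization depends on the set of offsets available at $q$ --- is harmless, because your key identity only needs to be invoked at a single fixed interior query pixel.
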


\begin{proof}
    We will prove the theorem in the case where $D_{out}=1$ by contradiction, and consequently the result will hold for any $D_{out}\in \mathbb{N}^*$. Since $D_{out}=1$, we view $\boldsymbol{W^C}$ as a three-dimensional tensor.

    In the convolutional layer, consider the output representation of the pixel at position $\gamma\in[H]\times[W]$:
    \begin{equation}\label{eqn:proof_conv}
        \mathrm{conv}(\boldsymbol{X}; \boldsymbol{W^C})_{\gamma}=\sum_{\delta \in \Delta}\sum_{i=1}^d \boldsymbol{X}_{\gamma+\delta,i} \boldsymbol{W}^{C}_{\delta,i},
    \end{equation}
    where $\Delta=\{-\lfloor K/2 \rfloor, \cdots, \lfloor K/2 \rfloor\}\times \{-\lfloor K/2 \rfloor, \cdots, \lfloor K/2 \rfloor\}$.
    
    In the MHSA layer, assume the attention score between the query pixel $\gamma$ and the key pixel $\gamma+\delta$ in the $k$-th head is $a_{\delta}^k(\gamma)$. 
    Let $\boldsymbol{W}^V_k\boldsymbol{W}^O_k=\boldsymbol{w}^k=(w_{1}^k, \cdots, w_{d}^k)^{\top}\in \R^{d\times D_{out}}$ (recall that $C_{out }=1$). 
    Then, the output representation of the pixel at position $\gamma\in[H]\times[W]$ is
    \begin{equation}
        \mathrm{MHSA}(\boldsymbol{X})_{\gamma}=\sum_{k=1}^{N_H}\sum_{\delta} a_{\delta}^h(\gamma) \sum_{i=1}^d \boldsymbol{X}_{\gamma+\delta,i} w_{i}^k
        =\sum_{\delta}\sum_{i=1}^d \boldsymbol{X}_{\gamma+\delta,i} \sum_{k=1}^{N_H} a_{\delta}^k(\gamma)  w_{i}^k.
        \label{eqn:proof_mhsa}
    \end{equation}
    
    Putting Equation \ref{eqn:proof_conv} and \ref{eqn:proof_mhsa} together, in order to ensure $\mathrm{conv}(\boldsymbol{X}, \boldsymbol{W^C})_{\gamma}=\mathrm{MHSA}(\boldsymbol{X})_{\gamma}$, we have
    \begin{equation}
        \boldsymbol W^{C}_{\delta,i} = \sum_{k=1}^{N_H} a_{\delta}^k(\gamma) w_{i}^k ~ (\forall \delta\in\Delta, i\in[d])
        \Rightarrow   \boldsymbol{\tilde W}^C = \sum_{k=1}^{N_H} \boldsymbol{a}^{k}(\gamma) \boldsymbol{w}^{k\top},
    \end{equation} 
    where $\boldsymbol{a}^{k}(\gamma)=(a_{\delta}^k(\gamma))_{\delta \in \Delta} \in\R^{K^2}$ is a row vector for any $k\in[N_H]$, and $\boldsymbol{\tilde W}^C\in\R^{K^2\times d}$ is reshaped from the weights $\boldsymbol{W}^C\in \R^{K\times K\times d\times 1}$.

    Note that
    \begin{equation}
        rank\left(\sum_{k=1}^{N_H} \boldsymbol{a}^{k}(\gamma) \boldsymbol{w}^{k\top} \right) \leq N_H < \min\{K^2,d\}.
    \end{equation}

    By properly choosing convolutional kernel weights $\boldsymbol{W}^C$ such that $rank(\boldsymbol{\tilde W}^C)=\min\{K^2,d\}$, we conclude the proof by contradiction.
\end{proof}

\paragraph{Remark.} For Vision Transformers and CNNs used in practice, we typically have $\min\{K^2,d\}=K^2$. Thus this result shows that $K^2$ heads are necessary, and Corollary \ref{cor:pixel_theorem} is optimal in terms of the number of heads.

\subsubsection{The patch-input setting}

In the patch-input setting, we show that at least 9 heads are needed for MHSA layers to perform convolution.

\begin{theorem}\label{thm:patch_lower_bound}
    In the patch-input setting, suppose $K \geq 3$ and $N_H\leq 8$. There exists a convolutional kernel weight $\boldsymbol{W}^C\in \R^{K\times K\times D_{in}\times D_{out}}$, such that any MHSA layer with $N_H$ heads and relative positional encoding cannot express $\mathrm{conv}(\cdot; \boldsymbol{W}^C)$.
\end{theorem}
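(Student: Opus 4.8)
The plan is to argue by contradiction with a rank argument carried out at the level of \emph{patch-to-patch transfer matrices}, paralleling the proof of Theorem~\ref{thm:pixel_lower_bound} but adapted to the patch geometry. First I would reduce to $D_{in}=D_{out}=1$, so that the kernel is $\boldsymbol W^C\in\R^{K\times K}$, every patch feature lies in $\R^{P^2}$, and the object the MHSA layer must reproduce is, for each patch $p$, the length-$P^2$ vector of convolution outputs over the pixels of $p$ (the patch-embedding layer may be taken injective --- otherwise two images with equal embeddings but unequal convolution outputs make the claim immediate --- and then absorbed into the value/output projections). Writing a patch coordinate as $(I,J)$ and a within-patch coordinate as $(u,v)$, I would rewrite the convolution in patch-indexed form, $\mathrm{conv}(\boldsymbol X)_{(\text{patch }p)}=\sum_{\delta\in\tilde\Delta}\boldsymbol{\tilde X}_{p+\delta,:}\,\boldsymbol{\hat M}_\delta$, where $\tilde\Delta=\{-\lceil\frac{K-1}{2P}\rceil,\dots,\lceil\frac{K-1}{2P}\rceil\}^2$ is the patch-level receptive field from the proof sketch of Theorem~\ref{thm:patch_theorem}, and each $\boldsymbol{\hat M}_\delta\in\R^{P^2\times P^2}$ has the explicit entrywise form $(\boldsymbol{\hat M}_\delta)_{(u,v),(u',v')}=\boldsymbol W^C_{\delta_1 P+u'-u,\ \delta_2 P+v'-v}$, read as $0$ when the index lies outside the kernel support. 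Since $K\geq 3$ forces $\lceil\frac{K-1}{2P}\rceil\geq 1$, we have $|\tilde\Delta|\geq 9$.

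On the MHSA side, writing $\boldsymbol B_k:=\boldsymbol W^V_k\boldsymbol W^O_k$ (a fixed matrix), the output at patch $p$ is $\sum_{p'}\boldsymbol{\tilde X}_{p',:}\big(\sum_{k=1}^{N_H}a^{(k)}_{p,p'}(\boldsymbol{\tilde X})\,\boldsymbol B_k\big)$. Feeding inputs supported on a single patch $p'$ and letting their magnitude tend to $0$ makes the content term $\boldsymbol{QK}^\top$ vanish, so the attention weights converge to fixed, purely-positional values $\bar a^{(k)}_\delta$ that depend only on the relative position $\delta=p'-p$; matching the two sides then yields $\boldsymbol{\hat M}_\delta=\sum_{k=1}^{N_H}\bar a^{(k)}_\delta\boldsymbol B_k$ for every $\delta\in\tilde\Delta$. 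Hence $\mathrm{span}\{\boldsymbol{\hat M}_\delta:\delta\in\tilde\Delta\}\subseteq\mathrm{span}\{\boldsymbol B_1,\dots,\boldsymbol B_{N_H}\}$, so this span has dimension at most $N_H\leq 8$.

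It then suffices to exhibit a kernel $\boldsymbol W^C$ for which $\{\boldsymbol{\hat M}_\delta:\delta\in\tilde\Delta\}$ is linearly independent, since that forces the span to have dimension $|\tilde\Delta|\geq 9>8\geq N_H$, a contradiction. For this, note $\boldsymbol{\hat M}_\delta=\boldsymbol A_\delta\boldsymbol W^C$ for a fixed $0/1$ ``structure'' matrix $\boldsymbol A_\delta$, that a given (matrix-location, kernel-slot) pair is occupied by at most one $\boldsymbol A_\delta$ (the pair determines $\delta$ through $u'-u\mapsto\delta_1$ and $v'-v\mapsto\delta_2$), and that each $\boldsymbol A_\delta$ is nonzero because $K\geq 3$; so the $\boldsymbol A_\delta$ are linearly independent. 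Choosing, for each $\delta$, a probe matrix-location near the corresponding corner/edge of a patch so that the induced kernel indices across the selected $\delta$'s are distinct, the resulting $|\tilde\Delta|\times|\tilde\Delta|$ evaluation determinant is a nonvanishing polynomial in the kernel entries --- in the practically relevant regime $K<2P$ the probes can be taken so that it is literally the product of $|\tilde\Delta|$ distinct kernel weights --- so a generic, indeed explicit, choice of $\boldsymbol W^C$ makes the $\boldsymbol{\hat M}_\delta$ independent.

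I expect the last step to be the main obstacle: verifying with fully correct bookkeeping that some kernel makes all $\geq 9$ transfer matrices linearly independent, and in particular handling the regime where $\tilde\Delta$ is strictly larger than $3\times 3$ (i.e.\ $K>2P+1$, or the boundary values $K\approx 2P$), where the $\boldsymbol{\hat M}_\delta$ have more overlapping support and the probe locations (or a convenient size-$9$ subfamily of $\tilde\Delta$) must be chosen with more care. A secondary point needing attention is making the vanishing-input limit in the span step rigorous, so that content-dependent attention scores are genuinely ruled out from contributing.
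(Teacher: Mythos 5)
Your proposal is correct and follows essentially the same route as the paper's proof: the paper likewise reduces to $D_{in}=D_{out}=1$, matches the patch-level identity $\sum_{k} a^{k}_{\delta}\, w^{k}_{pq}=w^{C}_{k(p,q,\delta)}$ (your $\boldsymbol{\hat M}_\delta=\sum_k \bar a^{(k)}_\delta \boldsymbol B_k$ written entrywise), and gets the contradiction from the fact that the nine patch-to-patch transfer matrices of a kernel with all entries nonzero are linearly independent (disjoint supports, using the implicit assumption $P\geq K$) while any combination of the $N_H\leq 8$ matrices $\boldsymbol W^V_k\boldsymbol W^O_k$ spans dimension at most $8$. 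The two points you flag as needing care---the vanishing-input limit that removes the content-dependent attention term, and the overlapping-support bookkeeping when $K$ is comparable to $P$---are exactly the places the paper glosses over (it treats the attention scores as fixed positional quantities and assumes $P\geq K$), so your refinements strengthen rather than change the argument.
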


Similar to Theorem \ref{thm:pixel_lower_bound}, this theorem is also proven with a rank-based argument.
However, the proof requires more complicated techniques to deal with the patch input, so we defer it to Appendix \ref{appendix:proof_patch_lower_bound}.

\paragraph{Remark.} This result shows that Corollary \ref{cor:9heads} is also optimal in terms of the number of heads in practical cases.

\paragraph{Discussions on the theoretical findings.} Our findings clearly demonstrate the difference between the pixel-input and patch-input setting: \emph{patch input makes self-attention require less heads to perform convolution compared to pixel input, especially when $K$ is large}. For example, according to Theorem \ref{thm:pixel_lower_bound}, MHSA layers with pixel input need at least $25$ heads to perform $5\times 5$ convolution, while those with patch input only need $9$ heads. Usually the number of heads in a MHSA layer is small in Vision Transformers, e.g., there are only $12$ heads in ViT-base. Therefore, our theory is realistic and aligns well with practical settings.

\section{Two-phase training of Vision Transformers}

Our theoretical results provide a construction that allows MHSA layers to express convolution. In this section, we propose a two-phase training pipeline for Vision Transformers which takes advantage of the construction. Then we conduct experiments using this pipeline and demonstrate that our theoretical insight can be used to inject convolutional bias to Vision Transformers and improve their performance in low data regimes. We also discuss additional benefits of the proposed training pipeline from the optimization perspective. Finally, we conclude this section with a discussion on the limitation of our method.

\subsection{Method and implementation details}
\label{sec:exp_method}
\paragraph{Two-phase training pipeline.} Inspired by the theoretical findings, we propose a two-phase training pipeline for Vision Transformers in the low data regime, which is illustrated in Figure \ref{fig:two_phase}. Specifically, we first train a ``convolutional'' variant of Vision Transformers, where the MHSA layer is replaced by a $K\times K$ convolutional layer. We refer to this as the \textbf{convolution phase} of training. After that, we transfer the weights in the pre-trained model to a Transformer model, and continue training the model on the same dataset. We refer to this as the \textbf{self-attention phase} of training. The non-trivial step in the pipeline is to initialize MHSA layers from well-trained convolutional layers, and we utilize the construction in the proof of Theorem \ref{thm:patch_theorem} to do so. Due to the existence of the convolution phase, we cannot use a \texttt{[cls]} token for classification. Instead, we follow \citet{liu2021swin} to perform image classification by applying global average pooling over the output of the last layer, followed by a linear classifier. This method is commonly used in CNNs for image classification.

Intuitively, in the convolution phase, the model learns a ``convolutional neural network'' on the data and enjoys the inductive bias including locality and spatial invariance which makes learning easier. In the self-attention phase, the model mimics the pre-trained CNN in the beginning, and gradually learns to leverage the flexibility and strong expressive power of self-attention.

\paragraph{Implementation details.} While our theory focuses on a single MHSA layer, we experiment with 6-layer Vision Transformers to show that our theoretical insight still applies when there are stacked Transformer layers. 
We focus on the low-data regime and train our model on CIFAR-100 \citep{krizhevsky2009learning}. The input resolution is set to 224, and the patch resolution $P$ is set to 16.

In the convolution phase, we experiment models with convolutional kernel size $K=3\ \text{and}\ 5$. To apply our theory, in the self-attention phase, the number of attention heads $N_H$ is set to 9. The input and output dimension of MHSA layers $d$ and $d_O$ are both set to 768. The size of each head $d_H$ is set to 768. The dimension of feed-forward layer $d_{FFN}$ is set to 3072. 
Detailed descriptions of the experimental settings are presented in Appendix \ref{appendix:exp}.


\begin{figure}[t]
\begin{minipage}{0.61\linewidth}
    \centering
    \includegraphics[width=0.87\linewidth]{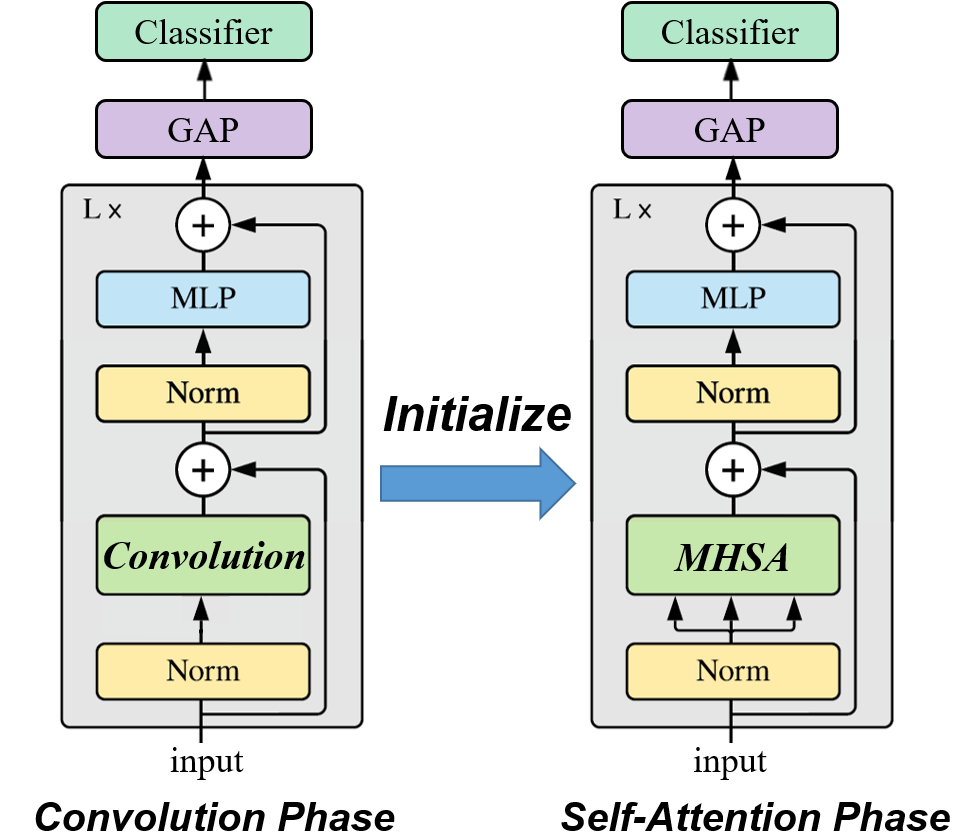}
    \caption{Overview of the two-phase training pipeline. See Section \ref{sec:exp_method} for details.}
    \label{fig:two_phase}
\end{minipage}\hfill
\begin{minipage}{0.34\linewidth}
    \centering
    \begin{subfigure}[t]{1\textwidth}
        \centering
        \includegraphics[width=1\linewidth]{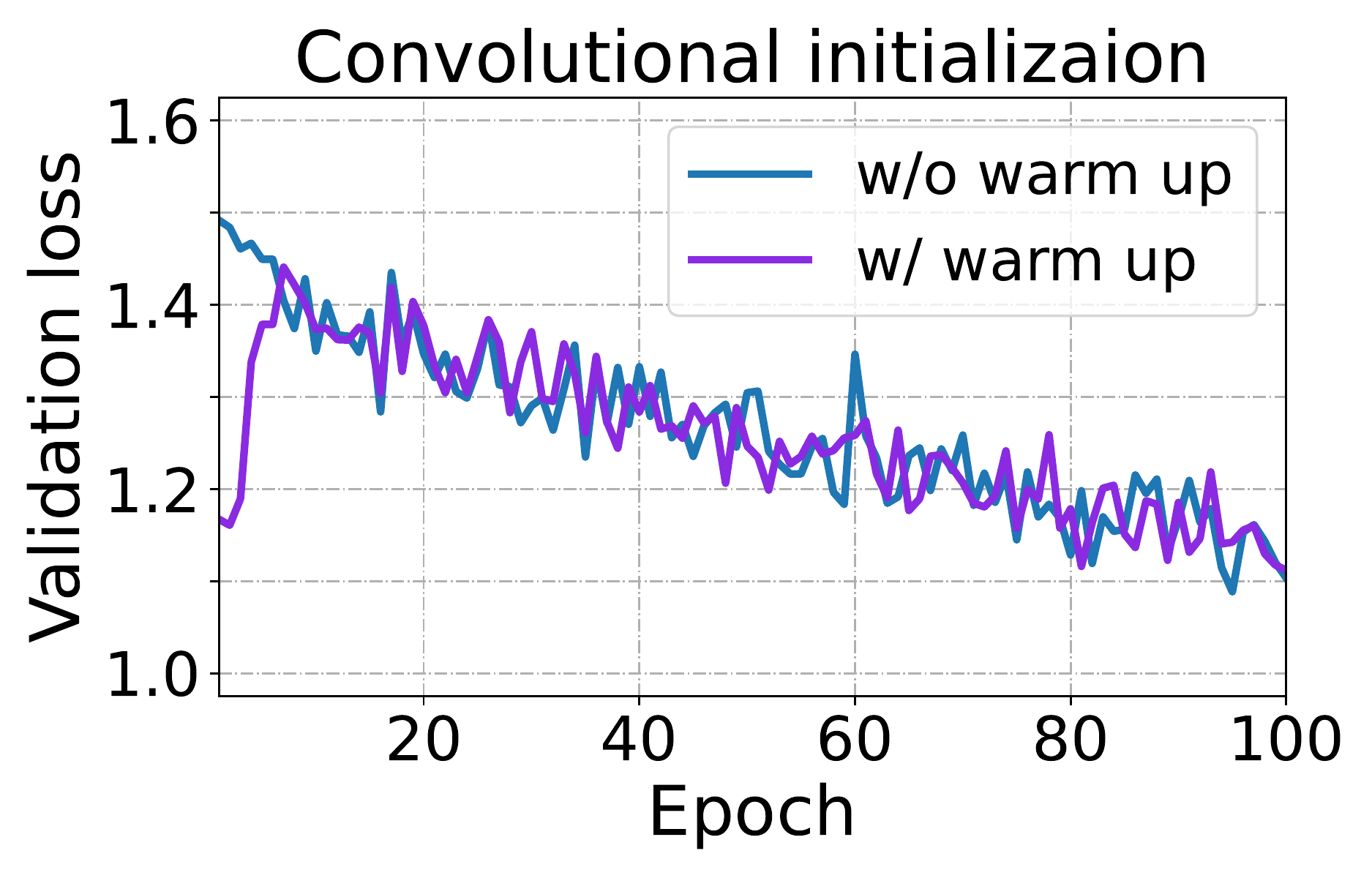}
    \end{subfigure}
    \begin{subfigure}[t]{1\textwidth}
        \centering
        \includegraphics[width=1\linewidth]{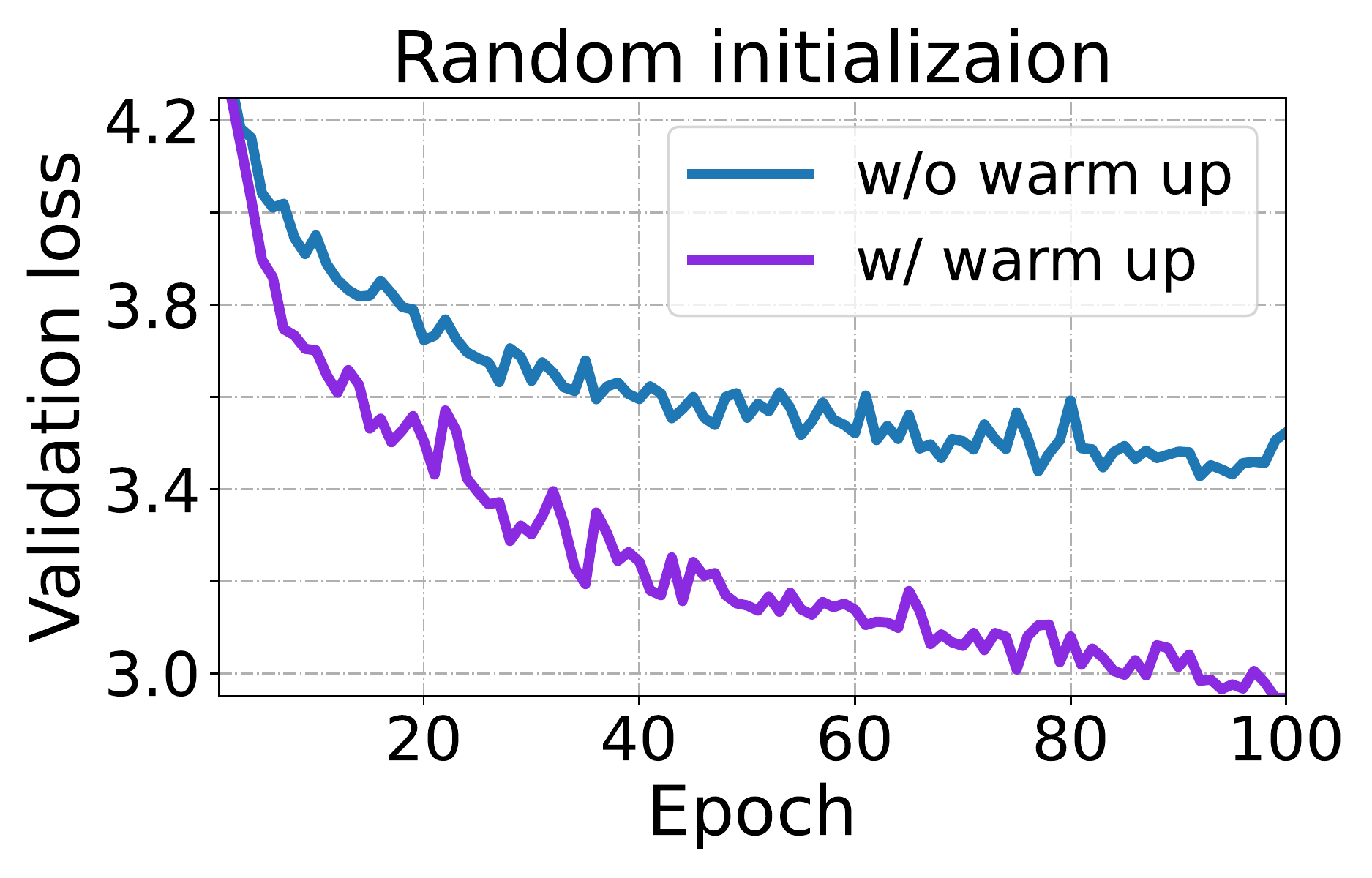}
    \end{subfigure}
    \caption{Loss curves of CMHSA with or without the warm-up stage under two initialization schemes.}
    \label{fig:warm_up}
\end{minipage}
\end{figure}

\subsection{Experimental results}

For ease of illustration, we name our models as CMHSA-$K$ (Convolutionalized MHSA) where $K$ is the size of the convolutional kernel in the first training phase. 
To demonstrate the effectiveness of our approach, we choose several baselines models for comparison:
\begin{itemize}
    \item ViT-base proposed in \citep{dosovitskiy2021an}, which applies Transformers on image classification straightforwardly.
    
    \item DeiT-small and DeiT-base proposed in \citet{touvron2021deit}, which largely improve the performance of ViT using strong data augmentation and sophisticated regularization. 
    
    \item CMHSA trained only in convolution phase or self-attention phase. When training directly in self-attention phase, the model is initialized randomly. In Table \ref{tab:main_tab}, CMHSA-$K$ (1st phase) refers to models trained only in convolution phase, and CMHSA (2nd phase) refers to models trained only in self-attention phase (Note that $K$ is irrelevant in this case).
\end{itemize}

To ensure a fair comparison, all the baseline models are trained for 400 epochs, while our models are trained for 200 epochs in each phase.

The experimental results are shown in Table \ref{tab:main_tab}. We evaluate the performance of the models in terms of both test accuracy and training cost, and make the following observations on the results:

\paragraph{The proposed two-phase training pipeline largely improves performance.} It's easy to see that DeiTs clearly outperform ViT, demonstrating the effectiveness of the training strategy employed by DeiT. Furthermore, our models with two-phase training pipeline outperform DeiTs by a large margin, e.g., the top-1 accuracy of our CMHSA-5 model is nearly \emph{9\% higher} than that of DeiT-base. This demonstrates that the proposed training pipeline can provide further performance gain on top of the data augmentation and regularization techniques in the low-data regime.

\paragraph{Both training phases are important.} From the last 5 rows of Table \ref{tab:main_tab}, we can see that under the same number of epochs, CMHSAs trained with only one phase always underperform those trained with both two phases. The test accuracy of CMHSA (2nd phase), which is a randomly initialized CMHSA trained for 400 epochs, is much lower than that of our final models (which are trained in both two phases). Therefore, the convolutional bias transferred from the first phase is crucial for the model to achieve good performance. Besides, the models trained only in the first phase are also worse than our final models. For example, CMHSA-5 outperforms CMHSA-5 (1st phase) by 2.62\%. This shows that the second phase enables the model to utilize the flexibility of MHSA layers to learn better representations, achieving further improvements upon the convolutional inductive bias.

\paragraph{The convolutional phase helps to accelerate training.} Training of Transformers is usually time-consuming due to the high computational complexity of the MHSA module. In contrast, CNNs enjoy much faster training and inference speed. In our proposed training pipeline, the convolution phase is very efficient. Although the self-attention phase is slightly slower, we can still finish 400 epochs of training using less time compared with DeiT-base. In Table \ref{tab:main_tab}, it is clear that our model significantly outperforms other models with a comparable training time.

\begin{table}[t]
\vspace{-10pt}
\caption{Experimental results on CIFAR-100 dataset. }\label{tab:main_tab}
\centering
\begin{tabular}{cccccccc}\toprule
Model         & $L$ & $N_H$ & $d$  & $d_{FFN}$ & Top-1 & Top-5 & Training time \\ \midrule
ViT-base      & 12  & 12    & 768  & 3072      & 60.90 & 86.66 & 1.00$\times$  \\
DeiT-small    & 12  & 6     & 384  & 1536      & 71.83 & 90.99 & 0.57$\times$  \\
DeiT-base     & 12  & 12    & 768  & 3072      & 69.98 & 88.91 & 1.00$\times$  \\ \midrule
CMHSA-3 (1st phase)& 6   & $-$   & 768  & 3072      & 76.07 & 93.03 & 0.45$\times$  \\
CMHSA-5 (1st phase)& 6   & $-$   & 768  & 3072      & 76.12 & 93.13 & 0.49$\times$  \\ \midrule
CMHSA (2nd phase)& 6   & 9     & 768  & 3072      & 69.83 & 91.39 & 1.48$\times$  \\ \midrule
CMHSA-3 (ours)& 6   & 9     & 768  & 3072      & 76.72 & 93.74 & 0.96$\times$  \\
CMHSA-5 (ours)& 6   & 9     & 768  & 3072      & \textbf{78.74} & \textbf{94.40} & 0.98$\times$   \\
\bottomrule     
\end{tabular}
\end{table}

\subsection{Additional benefits of the two-phase training pipeline}

As mentioned above, the two-phase training pipeline helps improve training efficiency and test accuracy of Vision Transformers. In this subsection we emphasize an additional benefit of the method: \textbf{The injected convolutional bias makes the optimization process easier}, allowing us to remove the warm-up epochs in training.

The warm-up stage is crucial to stabilize the training of Transformers and improve the final performance, but it also slows down the optimization and brings more hyperparameter tuning \citep{huang2020improving, xiong2020layer}. 
We empirically show that, when initialized with a pretrained CNN, our CMHSA model can be trained \emph{without} warm-up and still obtains competitive performance.

We train CMHSA in the self-attention phase with pretrained convolution-phase initialization and random initialization, and show the loss curves of the first 100 epochs in Figure \ref{fig:warm_up}. In the figure, the $x$-axis indicates the number of epoch and the $y$-axis indicates the validation loss.
``Convolutional Initialization'' refers to the models initialized from a pretrained convolution-phase, while ``Random Initialization'' refers to the models initialized randomly.
The only difference in experimental setting between the ``w/o warm up'' and ``w/ warm up'' curves is whether a warm-up stage is applied, and all the other hyperparameters are unchanged.

From Figure \ref{fig:warm_up}, we can see that when CMHSA is initialized randomly in the self-attention phase, the training is ineffective without the warm-up stage. For example, it takes nearly 80 epochs for the model without warm-up stage to reach the same validation loss achieved in the 20th epoch of the model with warm-up. 
In contrast, when initialized from the pretrained convolution phase, the validation losses are similar for models with and without the warm-up stage. After 200 epochs' training, the model without warm-up stage achieves $78.90\%$ top-1 accuracy, slightly outperforming the model with warm-up stage ($78.74\%$). Therefore, the proposed two-phase training pipeline can take advantage of the convolutional inductive bias and make training of Vision Transformers easier, while enables to remove the warm-up stage and eases the efforts of hyperparameter tuning.

\paragraph{Limitations.} 
Finally, we point out that our current method cannot enable any ViTs to mimic CNNs since we have some constraints on the ViT architecture. In particular, we require sufficient number of heads ($\geq 9$). As suggested by our theory, an exact mapping doesn't exist for smaller number of heads, and it would be interesting to study how to properly initialize ViT from CNN even when the exact mapping is not applicable. 

\section{Related Work and Discussions}

\subsection{Expressive power of self-attention layers and Transformers}

As Transformers become increasingly popular, many theoretical results have emerged to study their expressive power. 
Transformers are Turing complete (under certain assumptions)~\citep{perez2018on} and universal approximators~\citep{yun2019transformers}, and these results have been extended to Transformer variants with sparse attention~\citep{zaheer2020big, yun2020on}. \cite{levine2020depth, wies2021which} study how the depth, width and embedding dimension affect the expressive power of Transformers. \cite{dong2021attention} analyze the limitations of a pure self-attention Transformer, illustrating the importance of FFN layers and skip connections.

There are also some works focusing on a single self-attention layer or the self-attention matrices. \cite{bhojanapalli2020low} identify a low-rank bottleneck in attention heads when the size of each head is small. \cite{likhosherstov2021expressive} proves that a fixed self-attention module can approximate arbitrary sparse patterns depending on the input when the size of each head $d=O(\log N)$, where $N$ denotes the sequence length. 

Our work is motivated by the recent success of Vision Transformers, and aims to compare a layer in Transformers and in CNNs, which is different from the works mentioned above. 
The most relevant work is \citep{cordonnier2020on}, which shows that a MHSA layer with $K^2$ heads can express a convolution of kernel size $K$. However, this result only focuses on the pixel-input setting, which is infeasible for current Vision Transformers to apply, especially on high-resolution images.
We study the more realistic patch-input setting (of which the pixel-input setting is a special case). We also derive lower bounds on the number of heads for such expressiveness, showing the optimality of our result. Therefore, our work provides a more precise picture showing how a MHSA layer in current ViTs can express convolution.

\subsection{Training Vision Transformers}
Despite Vision Transformers reveal extraordinary performance when pre-trained on large-scale datasets (e.g., ImageNet-21k and JFT-300M),
they usually lay behind CNNs when trained from scratch on ImageNet, let alone smaller datasets like CIFAR-10/100.
Previous methods usually employ strong augmentations~\citep{touvron2021deit} or sophisticated optimizer~\citep{chen2021vision} as rescues. 
For instance, ~\citet{chen2021vision} observe that enforcing the sharpness constraint during training can dramatically enhance the performance of Vision Transformers.
~\citet{touvron2021deit} stack multiple data augmentation strategies to manually inject inductive biases.
They also propose to enhance the accuracy by distilling Vision Transformers from pre-trained CNN teachers.

Supported by our theoretical analysis, we propose a two-phase training pipeline to inject convolutional bias into ViTs. The most relevant work to our approach is \citep{dascoli2021convit}. \citet{dascoli2021convit} propose a variant of ViT called ConViT, and they also try to inject convolutional bias into the model by initializing it following the construction in \citep{cordonnier2020on} so that the model can perform convolution operation at initialization, which resembles the second phase of our training pipeline.
However, their work differs from ours in several aspects: 
First, their initialization strategy only applies in the pixel-input setting. Thus the models can only perform convolution on images which are $16\times$ downsampled. By contrast, our construction enables MHSA layers with patch input to perform convolution on the \emph{original} image.
Second, they only initialize the attention module to express a \emph{random} convolution, while our method explicitly transfers information from a well-learned CNN into a ViT.
Third, ConViT makes architectural changes by introducing Gated Positional Self-Attention layers, while we keep the MHSA module unmodified. 
%

\section{Conclusion}
In this work, we prove that a single ViT layer can perform any convolution operation constructively, and we further provide a lower bound on the number of heads for Vision Transformers to express CNNs. Corresponding with our analysis, we propose a two phase training pipeline to help inject convolutional bias into Transformers, which improves test accuracy, training efficiency and optimization stability of ViTs in the low data regimes. 




\bibliography{iclr2022_conference}

\begin{thebibliography}{32}
\providecommand{\natexlab}[1]{#1}
\providecommand{\url}[1]{\texttt{#1}}
\expandafter\ifx\csname urlstyle\endcsname\relax
  \providecommand{\doi}[1]{doi: #1}\else
  \providecommand{\doi}{doi: \begingroup \urlstyle{rm}\Url}\fi

\bibitem[Bhojanapalli et~al.(2020)Bhojanapalli, Yun, Rawat, Reddi, and
  Kumar]{bhojanapalli2020low}
Srinadh Bhojanapalli, Chulhee Yun, Ankit~Singh Rawat, Sashank Reddi, and Sanjiv
  Kumar.
\newblock Low-rank bottleneck in multi-head attention models.
\newblock In \emph{International Conference on Machine Learning}, pp.\
  864--873. PMLR, 2020.

\bibitem[Chen et~al.(2021)Chen, Hsieh, and Gong]{chen2021vision}
Xiangning Chen, Cho-Jui Hsieh, and Boqing Gong.
\newblock When vision transformers outperform resnets without pretraining or
  strong data augmentations, 2021.

\bibitem[Cordonnier et~al.(2020)Cordonnier, Loukas, and
  Jaggi]{cordonnier2020on}
Jean-Baptiste Cordonnier, Andreas Loukas, and Martin Jaggi.
\newblock On the relationship between self-attention and convolutional layers.
\newblock In \emph{International Conference on Learning Representations}, 2020.
\newblock URL \url{https://openreview.net/forum?id=HJlnC1rKPB}.

\bibitem[Dai et~al.(2019)Dai, Yang, Yang, Cohen, Carbonell, Le, and
  Salakhutdinov]{dai2019transformer}
Zihang Dai, Zhilin Yang, Yiming Yang, William~W Cohen, Jaime Carbonell, Quoc~V
  Le, and Ruslan Salakhutdinov.
\newblock Transformer-xl: Attentive language models beyond a fixed-length
  context.
\newblock \emph{arXiv preprint arXiv:1901.02860}, 2019.

\bibitem[D'Ascoli et~al.(2021)D'Ascoli, Touvron, Leavitt, Morcos, Biroli, and
  Sagun]{dascoli2021convit}
St{\'e}phane D'Ascoli, Hugo Touvron, Matthew~L Leavitt, Ari~S Morcos, Giulio
  Biroli, and Levent Sagun.
\newblock Convit: Improving vision transformers with soft convolutional
  inductive biases.
\newblock In Marina Meila and Tong Zhang (eds.), \emph{Proceedings of the 38th
  International Conference on Machine Learning}, volume 139 of
  \emph{Proceedings of Machine Learning Research}, pp.\  2286--2296. PMLR,
  18--24 Jul 2021.

\bibitem[Devlin et~al.(2019)Devlin, Chang, Lee, and Toutanova]{devlin2019bert}
Jacob Devlin, Ming-Wei Chang, Kenton Lee, and Kristina Toutanova.
\newblock Bert: Pre-training of deep bidirectional transformers for language
  understanding.
\newblock In \emph{Proceedings of the 2019 Conference of the North American
  Chapter of the Association for Computational Linguistics: Human Language
  Technologies, Volume 1 (Long and Short Papers)}, pp.\  4171--4186, 2019.

\bibitem[Dong et~al.(2021)Dong, Cordonnier, and Loukas]{dong2021attention}
Yihe Dong, Jean-Baptiste Cordonnier, and Andreas Loukas.
\newblock Attention is not all you need: Pure attention loses rank doubly
  exponentially with depth.
\newblock \emph{arXiv preprint arXiv:2103.03404}, 2021.

\bibitem[Dosovitskiy et~al.(2021)Dosovitskiy, Beyer, Kolesnikov, Weissenborn,
  Zhai, Unterthiner, Dehghani, Minderer, Heigold, Gelly, Uszkoreit, and
  Houlsby]{dosovitskiy2021an}
Alexey Dosovitskiy, Lucas Beyer, Alexander Kolesnikov, Dirk Weissenborn,
  Xiaohua Zhai, Thomas Unterthiner, Mostafa Dehghani, Matthias Minderer, Georg
  Heigold, Sylvain Gelly, Jakob Uszkoreit, and Neil Houlsby.
\newblock An image is worth 16x16 words: Transformers for image recognition at
  scale.
\newblock In \emph{International Conference on Learning Representations}, 2021.
\newblock URL \url{https://openreview.net/forum?id=YicbFdNTTy}.

\bibitem[He et~al.(2016)He, Zhang, Ren, and Sun]{he2016deep}
Kaiming He, Xiangyu Zhang, Shaoqing Ren, and Jian Sun.
\newblock Deep residual learning for image recognition.
\newblock In \emph{Proceedings of the IEEE conference on computer vision and
  pattern recognition}, pp.\  770--778, 2016.

\bibitem[Huang et~al.(2020)Huang, Perez, Ba, and Volkovs]{huang2020improving}
Xiao~Shi Huang, Felipe Perez, Jimmy Ba, and Maksims Volkovs.
\newblock Improving transformer optimization through better initialization.
\newblock In \emph{International Conference on Machine Learning}, pp.\
  4475--4483. PMLR, 2020.

\bibitem[Ke et~al.(2020)Ke, He, and Liu]{ke2020rethinking}
Guolin Ke, Di~He, and Tie-Yan Liu.
\newblock Rethinking positional encoding in language pre-training.
\newblock In \emph{International Conference on Learning Representations}, 2020.

\bibitem[Krizhevsky et~al.(2009)]{krizhevsky2009learning}
Alex Krizhevsky et~al.
\newblock Learning multiple layers of features from tiny images.
\newblock 2009.

\bibitem[Lei~Ba et~al.(2016)Lei~Ba, Kiros, and Hinton]{lei2016layer}
Jimmy Lei~Ba, Jamie~Ryan Kiros, and Geoffrey~E Hinton.
\newblock Layer normalization.
\newblock \emph{arXiv preprint arXiv:1607.06450}, 2016.

\bibitem[Levine et~al.(2020)Levine, Wies, Sharir, Bata, and
  Shashua]{levine2020depth}
Yoav Levine, Noam Wies, Or~Sharir, Hofit Bata, and Amnon Shashua.
\newblock The depth-to-width interplay in self-attention.
\newblock \emph{arXiv preprint arXiv:2006.12467}, 2020.

\bibitem[Li et~al.(2021)Li, Si, Li, Hsieh, and Bengio]{li2021learnable}
Yang Li, Si~Si, Gang Li, Cho-Jui Hsieh, and Samy Bengio.
\newblock Learnable fourier features for multi-dimensional spatial positional
  encoding.
\newblock In \emph{NeurIPS}, 2021.

\bibitem[Likhosherstov et~al.(2021)Likhosherstov, Choromanski, and
  Weller]{likhosherstov2021expressive}
Valerii Likhosherstov, Krzysztof Choromanski, and Adrian Weller.
\newblock On the expressive power of self-attention matrices.
\newblock \emph{arXiv preprint arXiv:2106.03764}, 2021.

\bibitem[Liu et~al.(2019)Liu, Ott, Goyal, Du, Joshi, Chen, Levy, Lewis,
  Zettlemoyer, and Stoyanov]{liu2019roberta}
Yinhan Liu, Myle Ott, Naman Goyal, Jingfei Du, Mandar Joshi, Danqi Chen, Omer
  Levy, Mike Lewis, Luke Zettlemoyer, and Veselin Stoyanov.
\newblock Roberta: A robustly optimized bert pretraining approach.
\newblock \emph{arXiv preprint arXiv:1907.11692}, 2019.

\bibitem[Liu et~al.(2021)Liu, Lin, Cao, Hu, Wei, Zhang, Lin, and
  Guo]{liu2021swin}
Ze~Liu, Yutong Lin, Yue Cao, Han Hu, Yixuan Wei, Zheng Zhang, Stephen Lin, and
  Baining Guo.
\newblock Swin transformer: Hierarchical vision transformer using shifted
  windows.
\newblock \emph{arXiv preprint arXiv:2103.14030}, 2021.

\bibitem[Loshchilov \& Hutter(2018)Loshchilov and Hutter]{loshchilov2018fixing}
Ilya Loshchilov and Frank Hutter.
\newblock Fixing weight decay regularization in adam, 2018.

\bibitem[Luo et~al.(2021)Luo, Li, Cai, He, Peng, Zheng, Ke, Wang, and
  Liu]{luo2021stable}
Shengjie Luo, Shanda Li, Tianle Cai, Di~He, Dinglan Peng, Shuxin Zheng, Guolin
  Ke, Liwei Wang, and Tie-Yan Liu.
\newblock Stable, fast and accurate: Kernelized attention with relative
  positional encoding.
\newblock In \emph{NeurIPS}, 2021.

\bibitem[Paszke et~al.(2019)Paszke, Gross, Massa, Lerer, Bradbury, Chanan,
  Killeen, Lin, Gimelshein, Antiga, et~al.]{paszke2019pytorch}
Adam Paszke, Sam Gross, Francisco Massa, Adam Lerer, James Bradbury, Gregory
  Chanan, Trevor Killeen, Zeming Lin, Natalia Gimelshein, Luca Antiga, et~al.
\newblock Pytorch: An imperative style, high-performance deep learning library.
\newblock \emph{Advances in Neural Information Processing Systems},
  32:\penalty0 8026--8037, 2019.

\bibitem[Pérez et~al.(2019)Pérez, Marinković, and Barceló]{perez2018on}
Jorge Pérez, Javier Marinković, and Pablo Barceló.
\newblock On the turing completeness of modern neural network architectures.
\newblock In \emph{International Conference on Learning Representations}, 2019.
\newblock URL \url{https://openreview.net/forum?id=HyGBdo0qFm}.

\bibitem[Raffel et~al.(2020)Raffel, Shazeer, Roberts, Lee, Narang, Matena,
  Zhou, Li, and Liu]{raffel2020exploring}
Colin Raffel, Noam Shazeer, Adam Roberts, Katherine Lee, Sharan Narang, Michael
  Matena, Yanqi Zhou, Wei Li, and Peter~J Liu.
\newblock Exploring the limits of transfer learning with a unified text-to-text
  transformer.
\newblock \emph{Journal of Machine Learning Research}, 21:\penalty0 1--67,
  2020.

\bibitem[Touvron et~al.(2021)Touvron, Cord, Douze, Massa, Sablayrolles, and
  Jegou]{touvron2021deit}
Hugo Touvron, Matthieu Cord, Matthijs Douze, Francisco Massa, Alexandre
  Sablayrolles, and Herve Jegou.
\newblock Training data-efficient image transformers \& distillation through
  attention.
\newblock In Marina Meila and Tong Zhang (eds.), \emph{Proceedings of the 38th
  International Conference on Machine Learning}, volume 139 of
  \emph{Proceedings of Machine Learning Research}, pp.\  10347--10357. PMLR,
  18--24 Jul 2021.
\newblock URL \url{https://proceedings.mlr.press/v139/touvron21a.html}.

\bibitem[Vaswani et~al.(2017)Vaswani, Shazeer, Parmar, Uszkoreit, Jones, Gomez,
  Kaiser, and Polosukhin]{vaswani2017attention}
Ashish Vaswani, Noam Shazeer, Niki Parmar, Jakob Uszkoreit, Llion Jones,
  Aidan~N Gomez, \L~ukasz Kaiser, and Illia Polosukhin.
\newblock Attention is all you need.
\newblock In I.~Guyon, U.~V. Luxburg, S.~Bengio, H.~Wallach, R.~Fergus,
  S.~Vishwanathan, and R.~Garnett (eds.), \emph{Advances in Neural Information
  Processing Systems}, volume~30. Curran Associates, Inc., 2017.
\newblock URL
  \url{https://proceedings.neurips.cc/paper/2017/file/3f5ee243547dee91fbd053c1c4a845aa-Paper.pdf}.

\bibitem[Wies et~al.(2021)Wies, Levine, Jannai, and Shashua]{wies2021which}
Noam Wies, Yoav Levine, Daniel Jannai, and Amnon Shashua.
\newblock Which transformer architecture fits my data? a vocabulary bottleneck
  in self-attention.
\newblock \emph{arXiv preprint arXiv:2105.03928}, 2021.

\bibitem[Wightman(2019)]{rw2019timm}
Ross Wightman.
\newblock Pytorch image models.
\newblock \url{https://github.com/rwightman/pytorch-image-models}, 2019.

\bibitem[Xiong et~al.(2020)Xiong, Yang, He, Zheng, Zheng, Xing, Zhang, Lan,
  Wang, and Liu]{xiong2020layer}
Ruibin Xiong, Yunchang Yang, Di~He, Kai Zheng, Shuxin Zheng, Chen Xing,
  Huishuai Zhang, Yanyan Lan, Liwei Wang, and Tieyan Liu.
\newblock On layer normalization in the transformer architecture.
\newblock In \emph{International Conference on Machine Learning}, pp.\
  10524--10533. PMLR, 2020.

\bibitem[Yu \& Koltun(2015)Yu and Koltun]{yu2015multi}
Fisher Yu and Vladlen Koltun.
\newblock Multi-scale context aggregation by dilated convolutions.
\newblock \emph{arXiv preprint arXiv:1511.07122}, 2015.

\bibitem[Yun et~al.(2019)Yun, Bhojanapalli, Rawat, Reddi, and
  Kumar]{yun2019transformers}
Chulhee Yun, Srinadh Bhojanapalli, Ankit~Singh Rawat, Sashank Reddi, and Sanjiv
  Kumar.
\newblock Are transformers universal approximators of sequence-to-sequence
  functions?
\newblock In \emph{International Conference on Learning Representations}, 2019.

\bibitem[Yun et~al.(2020)Yun, Chang, Bhojanapalli, Rawat, Reddi, and
  Kumar]{yun2020on}
Chulhee Yun, Yin-Wen Chang, Srinadh Bhojanapalli, Ankit~Singh Rawat, Sashank~J.
  Reddi, and Sanjiv Kumar.
\newblock O$(n)$ connections are expressive enough: Universal approximability
  of sparse transformers.
\newblock In \emph{NeurIPS}, 2020.
\newblock URL
  \url{https://proceedings.neurips.cc/paper/2020/hash/9ed27554c893b5bad850a422c3538c15-Abstract.html}.

\bibitem[Zaheer et~al.(2020)Zaheer, Guruganesh, Dubey, Ainslie, Alberti,
  Ontanon, Pham, Ravula, Wang, Yang, et~al.]{zaheer2020big}
Manzil Zaheer, Guru Guruganesh, Kumar~Avinava Dubey, Joshua Ainslie, Chris
  Alberti, Santiago Ontanon, Philip Pham, Anirudh Ravula, Qifan Wang, Li~Yang,
  et~al.
\newblock Big bird: Transformers for longer sequences.
\newblock In \emph{NeurIPS}, 2020.

\end{thebibliography}
\bibliographystyle{iclr2022_conference}

\newpage

\appendix
\section*{Appendix}
\section{Omitted proofs of theoretical results}
\label{appendix:proofs}

\subsection{Proof of Lemma \ref{lemma:rpe_attn}}
\begin{proof}
    Recall that $\boldsymbol B_{i,j}=b_{(x_i-x_j, y_i-y_j)}$  where $(x_{\ell}, y_{\ell})$ denotes the position the $\ell$-th patch. Set $b_{\delta_0}=M$ and $b_{\delta}=0(\delta\neq \delta_0)$, where $M$ is a scalar. Then
    \begin{equation}
        \mathrm{softmax}(\boldsymbol B_{q,:})_k=\left\{
        \begin{array}{ll}
            \frac{1}{\mathrm{e}^M+N-1} &  q-k\neq \delta\\
            \frac{\mathrm{e}^M}{\mathrm{e}^M+N-1} &  q-k=\delta
        \end{array}
        \right.
    \end{equation}
   
    Note that
    \begin{align}
        \lim_{M\to +\infty} \frac{\mathrm{e}^M}{\mathrm{e}^M+N-1} &= 1.\\
        \lim_{M\to +\infty} \frac{1}{\mathrm{e}^M+N-1} &= 0.
    \end{align}
    Therefore, we only need to set $M$ to be sufficiently large number to conclude the proof. For example, by setting $M=40$ we will have $\mathrm{softmax}(\boldsymbol B_{q,:})_k=\mathbbm{1}_{\{q-k=\delta\}}$ up to machine precision.
\end{proof}

\subsection{Proof of Theorem \ref{thm:patch_theorem}}
\label{appendix:proof_the_thm}
\begin{proof}
Assume the input (sequence of flattened image patches) is $\boldsymbol X$.
We only need to prove the result for $d_H=d$ and $d_O=P^2D_{out}$, since an MHSA layer with larger $d_H$ and/or $d_O$ is at least as expressive as the one with $d_H=d$ and $d_O=P^2D_{out}$.

Define the receptive field of a given patch in $K\times K$ convolution be the set of patches which contain at least one pixel in the receptive field of any pixel in the given patch. Then it's easy to see that the relative position between a given patch and the patches in its receptive field are
\begin{equation}
    \tilde \Delta = \left\{-\left\lceil \frac{K-1}{2P} \right\rceil, \cdots, \left\lceil \frac{K-1}{2P} \right\rceil \right\} \times \left\{-\left\lceil \frac{K-1}{2P} \right\rceil, \cdots, \left\lceil \frac{K-1}{2P} \right\rceil\right\}.
\end{equation}

Note that $N_H=|\tilde \Delta|$. Therefore, for any relative position index $\delta\in \tilde \Delta$, we can assign an attention head for it, such that the query patch always attends to the patch at the given relative position $\delta$ in this head. We further set $\boldsymbol{W}^V_k=(\boldsymbol{I}_{d})$ (recall that $d_H=d$). Consequently, the hidden representation (before the final projection $\boldsymbol{W}^O$) of the query patch is the concatenation of the input features of all the patches in its receptive field. Precisely speaking, in Equation \ref{eqn:def_mhsa}, we have 
\begin{equation}
    \mathrm{concat}(\mathrm{SA}_1(\boldsymbol X),\cdots, \mathrm{SA}_{N_H}(\boldsymbol X))_{q,:}=\mathrm{concat}(\boldsymbol X_{q+\delta,:})_{\delta\in \tilde \Delta}.
\end{equation}



In the convolutional layer defined by $\boldsymbol W^C$, the output feature of any pixel in the $q$-th patch is a \textbf{linear function} of $\mathrm{concat}(\boldsymbol X_{q+\delta,:})_{\delta\in \tilde \Delta}$. So the output feature of the whole patch is also a linear function of $\mathrm{concat}(\boldsymbol X_{q+\delta,:})_{\delta\in \tilde \Delta}$. Therefore, there exists a linear projection matrix $\boldsymbol W^O$ such that 
\begin{equation}
    \mathrm{MHSA}(\boldsymbol X)_q = \mathrm{concat}(\boldsymbol X_{q+\delta,:})_{\delta\in \tilde \Delta}\boldsymbol W^O = \mathrm{conv}(\boldsymbol X)_q
\end{equation}

Moreover, due to the translation invariance property of the convolution operation, the linear projection matrix $\boldsymbol W^O$ \emph{does not depend on} $q$. Therefore, $\mathrm{MHSA}(\boldsymbol X)_q= \mathrm{conv}(\boldsymbol X)_q$ holds for any $q$. In other words, $\mathrm{MHSA}(\boldsymbol X)= \mathrm{conv}(\boldsymbol X)$.
\end{proof}

\paragraph{Remark.}
Indeed, we can presents $\boldsymbol W^O$ constructively: Assume $r\in[N_H];~s,t\in[P^2];~i\in[D_{in}];~j\in[D_{out}]$. Then $\boldsymbol W^O_{(r-1)d+(s-1)D_{in}+i, (t-1)D_{out}+j}=\boldsymbol W^C_{x(r,s,t),y(r,s,t),i,j}$, where $x(r,s,t), y(r,s,t)\in[K]\cup\{0\}$ are defined as follows:

Let $q$ be a patch on the image, and let $\tilde\Delta=\{\delta_1,\cdots, \delta_{N_H}\}$.
When the $s$-th pixel in the $(q+\delta_r)$-th patch is in the receptive field of the $t$-th pixel in the $q$-th patch, we use $(x(r,s,t),y(r,s,t))$ to denote its location in the receptive field. Otherwise, we let $(x(r,s,t),y(r,s,t))=(0,0)$, and define $\boldsymbol W^C_{x(r,s,t),y(r,s,t),i,j}=0$ in this case.

This construction will be useful in our experiment, which requires to transfer the knowledge of a convolutional layer into an MHSA layer (Section \ref{sec:exp_method}).

\subsection{Proof of Theorem \ref{thm:patch_lower_bound}}
\label{appendix:proof_patch_lower_bound}
\begin{proof}
    We will prove the theorem in the case where $D_{out}=1$, and consequently the result will hold for any $D_{out}\in \mathbb{N}^*$. Furthermore, we assume that $D_{in}=1$ since we can set $\boldsymbol{W}^C_{:,:,2:,:}\in \boldsymbol{0} \R^{K\times K\times (D_{in}-1)\times D_{out}}$ if $D_{in}>1$. In this way, the convolution computation will ignore all but the first channel.
    
    Assume the input (sequence of flattened image patches) is $\boldsymbol X$.
    Recall that a flattened patch is defined as the concatenation of the features of all the pixels in it, i.e., 
    \begin{equation}
        \boldsymbol{X}_{i,:} = \mathrm{concat}\left(\boldsymbol X_{h_{i1},w_{i1},:}, \cdots, \boldsymbol X_{h_{iP^2},w_{iP^2},:}\right)
    \end{equation}
    Since we have assumed that $D_{in}=1$, the feature of a pixel $\boldsymbol X_{h_{ip},w_{ip},:}$ is actually a scalar. Thus $\boldsymbol{\tilde X}_{i,:}\in\R^{P^{2}}$, i.e., $d=P^2$.
    
    If the output of the MHSA layer could express convolution, the output representation of a patch must contain the output representations of all the pixels in the convolutional layer. Again, since $D_{out}=1$, we can assume that the output dimension of the MHSA layer $d_O=P^2$. In other words, the output representation of a patch is the concatenation of the output representations of all its pixels.
   
    Therefore, $\boldsymbol{W}^V_k\boldsymbol{W}^O_k\in \R^{P^2\times P^2}~(\forall~k\in[N_H])$, and we let $\boldsymbol{W}^V_k\boldsymbol{W}^O_k=(w_{pq}^k)_{p,q\in[P^2]}$.
    
    In the MHSA layer, assume the attention distribution of query patch $\gamma$ in the $k$-th head is $\boldsymbol{a}^k(\gamma)=(a_{\delta}^k(\gamma))_{\gamma+\delta\in[H]\times[W]}$, where $\delta$ stands for the relative position between the query patch and the key patch.
    Consider the output feature of the pixel at position $q$ in patch $\gamma$ ($q$ denotes the location of the pixel on the patch, and $\gamma$ denotes the location of the patch on the image). We have
    \begin{align}
        \mathrm{MHSA}(\boldsymbol{X})_{{\gamma},q}=&\sum_{k=1}^{N_H}\sum_{\delta} a_{\delta}^k(\gamma) \sum_{q=1}^{P^2} \boldsymbol X_{\gamma+\delta,q} w_{pq}^k\\
        =&\sum_{\delta}\sum_{q=1}^{P^2} \boldsymbol X_{\gamma+\delta,q} \sum_{k=1}^{N_H} a_{\delta}^k(\gamma)  w_{pq}^k.
    \end{align}

    The above experssion is a linear transformation of $X$. In the convolutional layer, only pixels in the 9 neighboring patches (including the center patch itself) can be relevant, since $P>K$. Thus, $a_{\delta}^k(\gamma)>0$ only for $\delta\in \Delta=\{-1,0,1\}^2:=\{\delta_1, \cdots, \delta_9\}$.

    Let the (flattened) convolutional kernel $\boldsymbol{W}^C=(w^C_1, \cdots, w^C_{K^2})\in \R^{K^2}$, and additionally let $w^{C}_0=0$. Then for any $p,q\in \R^{P^2}, \delta\in\Delta$, we have
    \begin{equation}\label{eqn:patch_lower_bound_key}
        \sum_{h=1}^{N_H} a_{\delta}^h(\gamma)  w_{pq}^h = w^C_{k(p,q,\delta)},
    \end{equation}
    where $k(p,q,\delta)\in[K^2]\cup \{0\}$ is an index dependent on $p,q$ and $\delta$. $k(p,q,\delta)\neq 0$ if and only if the $q$-th pixel in the $\gamma+\delta$-th patch is in the receptive field of the $p$-th pixel in the $\gamma$-th patch. When $k(p,q,\delta)\neq 0$, the value of $k(p,q,\delta)$ only depends on the relative position between the two pixels.
    
    Let $\boldsymbol w_{pq}=(w_{pq}^1, \cdots, w_{pq}^{N_H})$, and 
    \begin{equation}
       \boldsymbol  W = \begin{pmatrix}
           \boldsymbol w_{11}\\
           \boldsymbol w_{12}\\
           \vdots \\
           \boldsymbol w_{PP}
       \end{pmatrix},
        A = \begin{pmatrix} 
            a_{\delta_1}^1& \cdots & a_{\delta_9}^1\\
            \vdots & &\vdots \\
            a_{\delta_1}^{N_H} & \cdots & a_{\delta_9}^{N_H} 
        \end{pmatrix},
        \tilde W^C = \begin{pmatrix} 
            w^C_{k(1,1,\delta_1)}& \cdots & w^C_{k(1,1,\delta_9)}\\
            \vdots & &\vdots \\
            w^C_{k(P,P,\delta_1)}& \cdots & w^C_{k(P,P,\delta_9)}\\
        \end{pmatrix}.
    \end{equation}
    Then Eqn \ref{eqn:patch_lower_bound_key} can be written in matrix form as $\boldsymbol {WA}=\boldsymbol {\tilde W}^C$.

    Since $P\geq K$, all the column in $\boldsymbol {\tilde W}^C$ is either a one-hot or a zero vector (pixels at the same position in the patches cannot be in the receptive field of \emph{one} pixel). Besides, none of the 9 rows is zero since they are all needed for the convolution computation. Therefore, we can select 9 columns in $\boldsymbol {\tilde W}^C$ and reorder them properly to form a diagonal sub-matrix of $\boldsymbol {\tilde W}^C$, which implies $rank(\boldsymbol {\tilde W}^C)=9$ as long as all the entries in the convolutional kernel is non-zero.

    On the other hand, $rank(\boldsymbol {WA})\leq rank(\boldsymbol {W})\leq N_H\leq 8$, which leads to a contradiction and concludes the proof.
\end{proof}

\section{Details on the experiment settings}
\label{appendix:exp}

In the experiments, we evaluate our CMHSA-3/5 models on CIFAR-100 \citep{krizhevsky2009learning}, using the proposed two-phase training pipeline. In both phases, the model is trained for 200 epochs with a 5-epoch warm-up stage followed by a cosine decay learning rate scheduler. In the convolutional phase, the patch projection layer is fixed as identity.

To train our models, we AdamW use as the optimizer, and set its hyperparameter $\varepsilon$ to $1e-8$ and $(\beta_1, \beta_2)$ to $(0,9, 0.999)$~\citep{loshchilov2018fixing}. 
We experiment with peak learning rate in $\{1e-4, 3e-4, 5e-4\}$ in the convolution phase, and $\{1e-5, 3e-5, 5e-5, 7e-5\}$ in the self-attention phase. 
The batch size is set to 128 in both phases.
We employ all the data augmentation and regularization strategies of \cite{touvron2021deit}, and remain all the relevant hyperparameters unmodified.

Our codes are implemented based on \texttt{PyTorch} \citep{paszke2019pytorch} and the \texttt{timm} library \citep{rw2019timm}. All the models are trained on 4 NVIDIA Tesla V100 GPUs with 16GB memory and the reported training time is also measured on these machines.

\end{document}